\renewenvironment{myproof}[1]
{\par\noindent\textbf{Proof of #1.}\ \enspace\ignorespaces\begin{allowdisplaybreaks}}
{\end{allowdisplaybreaks}\hspace{\stretch{1}}$\square$}
\begin{document}
\begin{frontmatter}

\title{Maximizing Submodular or Monotone Approximately Submodular Functions by Multi-objective Evolutionary Algorithms}
\author{Chao Qian$^{1}$}
\ead{qianc@lamda.nju.edu.cn}
\author{Yang Yu$^{1}$}
\ead{yuy@lamda.nju.edu.cn}
\author{Ke Tang$^{2}$}
\ead{tangk3@sustech.edu.cn}
\author{Xin Yao$^{2}$}
\ead{xiny@sustech.edu.cn}
\author{Zhi-Hua Zhou$^{1}$\corref{cor1}}
\ead{zhouzh@lamda.nju.edu.cn}
\cortext[cor1]{Corresponding author}
\address{$^{1}$National Key Laboratory for Novel Software Technology,\\ Nanjing University, Nanjing 210023, China\\\vspace{0.5em}
$^{2}$Shenzhen Key Laboratory of Computational Intelligence, Department of Computer Science and Engineering,\\ Southern University of Science and Technology, Shenzhen 518055, China}

\begin{abstract}
Evolutionary algorithms (EAs) are a kind of nature-inspired general-purpose optimization algorithm, and have shown empirically good performance in solving various real-word optimization problems. During the past two decades, promising results on the running time analysis (one essential theoretical aspect) of EAs have been obtained, while most of them focused on isolated combinatorial optimization problems, which do not reflect the general-purpose nature of EAs. To provide a general theoretical explanation of the behavior of EAs, it is desirable to study their performance on general classes of combinatorial optimization problems. To the best of our knowledge, the only result towards this direction is the provably good approximation guarantees of EAs for the problem class of maximizing monotone submodular functions with matroid constraints. The aim of this work is to contribute to this line of research. Considering that many combinatorial optimization problems involve non-monotone or non-submodular objective functions, we study the general problem classes, maximizing submodular functions with/without a size constraint and maximizing monotone approximately submodular functions with a size constraint. We prove that a simple multi-objective EA called GSEMO-C can generally achieve good approximation guarantees in polynomial expected running time.
\end{abstract}

\begin{keyword}
Evolutionary algorithms \sep submodular optimization \sep multi-objective evolutionary algorithms \sep running time analysis \sep computational complexity \end{keyword}
\end{frontmatter}

\newpage
\section{Introduction}

Evolutionary algorithms (EAs)~\cite{back:96} are a kind of randomized metaheuristic optimization algorithm, inspired by the evolution process of natural species, i.e., natural selection and survival of the fittest. Starting from a random population of solutions, EAs iteratively apply reproduction (e.g., mutation and recombination) operators to generate offspring solutions from the current population, and then apply a selection operator to eliminate less desirable solutions. EAs have been applied to diverse areas (e.g., robotics~\cite{li2018path}, networks~\cite{yuan2018virtual} and machine learning~\cite{zhou2019evolutionary}) and can produce human-competitive results~\cite{koza2003s}. Compared with the application, the theoretical analysis of EAs is, however, far behind. Many researchers thus have been devoted to understanding the behavior of EAs from a theoretical point of view, which is still an ongoing challenge.

During the past two decades, a lot of progress has been made on the running time analysis of EAs, which is one essential theoretical aspect. The running time measures how many objective (i.e., fitness) function evaluations an EA needs until finding an optimal solution or an approximate solution. The running time analysis of EAs started with artificial example problems. In~\cite{droste1998rigorous,droste2002analysis}, a simple single-objective EA called (1+1)-EA has been shown to be able to solve two well-structured pseudo-Boolean problems OneMax and LeadingOnes in $\Theta(n\log n)$ and $\Theta(n^2)$ (where $n$ is the problem size) expected running time, respectively. These two problems are to maximize the number of 1-bits of a solution and the number of consecutive 1-bits counting from the left of a solution, respectively. Both of them have a short path with increasing fitness to the optimum. For some problems (e.g., SPC) where there is a short path with constant fitness to the optimum, the (1+1)-EA can also find an optimal solution in polynomial expected time~\cite{jansen2001evolutionary}. But when the problem (e.g., Trap) has a deceptive path, i.e., a path with increasing fitness away from the optimum, the (1+1)-EA will need exponential running time~\cite{he2001drift}. More results can be found in~\cite{auger2011theory}.

The analysis on simple artificial problems disclosed theoretical properties of EAs (e.g., which problem structures are easy or hard for EAs), and also helped to develop approaches for analyzing more complex problems. The running time analysis of EAs was then extended to combinatorial optimization problems. For some P-solvable problems, EAs have been shown to be able to find an optimal solution in polynomial expected time. For example, the minimum spanning tree problem can be solved by the (1+1)-EA and a simple multi-objective EA called GSEMO in $O(m^2(\log n+\log w_{\max}))$~\cite{neumann2007randomized} and $O(mn(n+\log w_{\max}))$~\cite{neumann2006minimum} expected time, respectively. Note that $m$, $n$ and $w_{\max}$ are the number of edges, the number of nodes and the maximum edge weight of a graph, respectively. For some NP-hard problems, EAs have been shown to be able to achieve good approximation ratios in polynomial expected time. For example, for the partition problem, the (1+1)-EA can achieve a $(4/3)$-approximation ratio in $O(n^2)$ expected time~\cite{witt2005worst}; for the minimum set cover problem, the expected running time of the GSEMO until obtaining a $(\log m +1)$-approximation ratio is $O(m^2n+mn(\log n+\log c_{\max}))$~\cite{friedrich2010approximating}, where $m$, $n$ and $c_{\max}$ denote the size of the ground set, the number of subsets and the maximum cost of a subset, respectively. For more running time results of EAs on combinatorial optimization problems, the reader can refer to~\cite{neumann2010bioinspired}.

For the analysis of the GSEMO (which is a multi-objective EA) on single-objective optimization problems (e.g., minimum spanning tree and minimum set cover), the original single-objective problem is transformed into a multi-objective problem, which is then solved by the GSEMO. Note that multi-objective optimization here is just an intermediate process, which might be beneficial~\cite{friedrich2010approximating,neumann2006minimum,neumann2011computing,qian.ijcai15}, and we still focus on the quality of the best solution w.r.t. the original single-objective problem, in the population found by the GSEMO. Running time analysis of EAs on real multi-objective optimization problems has also been investigated, where the running time is measured by the number of fitness evaluations until finding the Pareto front (which represents different optimal tradeoffs between the multiple objectives) or an approximation of the Pareto front. For example, Giel~\cite{GielCEC03} proved that the GSEMO can solve the bi-objective pseudo-Boolean problem LOTZ in $O(n^3)$ expected time; for the NP-hard bi-objective minimum spanning tree problem, it has been shown that the GSEMO can obtain a $2$-approximation ratio in pseudo-polynomial time~\cite{neumann2007expected,qian2013analysis}.

The analysis on combinatorial optimization problems helped to reveal the ability of EAs. However, most of the previous promising results were obtained for isolated problems, while EAs are known to be general-purpose optimization algorithms, which can be applied to various problems. Thus, it is more desirable to provide a general theoretical explanation of the behavior of EAs, that is, to theoretically study the performance of EAs on general classes of combinatorial optimization problems.

To the best of our knowledge, only two pieces of work in this direction have been reported. Reichel and Skutella~\cite{reichel2010evolutionary} first studied the problem class of maximizing linear functions with $k$ matroid constraints, which includes some well-known combinatorial optimization problems such as maximum matching, Hamiltonian path, etc. They proved that the (1+1)-EA can obtain a $(1/k)$-approximation ratio in $O(n^{k+2}(\log r+\log w_{\max}))$ expected running time, where $n$, $r$ and $w_{\max}$ denote the size of the ground set, the minimum rank of the ground set w.r.t. one matroid and the maximum weight of an element, respectively. Later, Friedrich and Neumann~\cite{friedrich2015maximizing} considered a more general problem class, where the objective function is relaxed to satisfy the monotone and submodular property. The (1+1)-EA has been shown to be able to achieve a $(\frac{1}{k+1/p+\epsilon})$-approximation ratio in $O(\frac{1}{\epsilon}n^{2p(k+1)+1}k\log n)$ expected time, where $p\geq 1$ and $\epsilon >0$. They also studied a specific non-monotone case, i.e., symmetric objective functions, and proved that the expected running time until the GSEMO obtains a $(\frac{1}{(k+2)(1+\epsilon)})$-approximation ratio for maximizing symmetric submodular functions with $k$ matroid constraints is $O(\frac{1}{\epsilon}n^{k+6}\log n)$.

The aim of this paper is to contribute to this line of research. Considering that the objective function of many combinatorial optimization problems can be non-monotone (not necessarily symmetric) or non-submodular, we study the performance of EAs on the general problem classes, maximizing submodular functions with/without a size constraint and maximizing monotone approximately submodular functions with a size constraint. Note that the objective function is a set function $f: 2^{V} \rightarrow \mathbb{R}$ which maps a subset of the ground set $V$ to a real value, and a size constraint means that the size of a subset is no larger than a budget $k$. We prove that for any concerned problem class, a variant of the GSEMO, called GSEMO-C, can obtain a good approximation guarantee in polynomial expected running time. Our main results can be summarized as follows.\vspace{-0.8em}
\begin{itemize}
  \item For the problem class of maximizing non-monotone submodular functions without constraints, with special instances including maximum cut~\cite{goemans1995improved}, maximum facility location~\cite{ageev19990} and variants of the maximum satisfiability problem~\cite{haastad2001some}, we prove that the GSEMO-C achieves a constant approximation ratio of $(\frac{1}{3}-\frac{\epsilon}{n})$ in $O(\frac{1}{\epsilon}n^4\log n)$ expected running time (i.e., {\bf Theorem~\ref{theo-nonmonotone}}), where $n$ is the size of the ground set $V$ and $\epsilon >0$.
  \item For the problem class of maximizing submodular and approximately monotone functions with a size constraint, with special instances such as sensor placement~\cite{krause2008near}, we prove that the GSEMO-C within $O(n^2(\log n+k))$ expected time finds a subset $X$ with $f(X) \geq (1-1/e)\cdot(\mathrm{OPT}-k\epsilon)$ (i.e., {\bf Theorem~\ref{theo-apprx-monotone}}), where $e$ is the base of the natural logarithm, $\mathrm{OPT}$ denotes the optimal function value, and $\epsilon \geq 0$ captures the degree of approximate monotonicity.
  \item For the problem class of maximizing monotone and approximately submodular functions with a size constraint, with special instances including sparse regression~\cite{das2011submodular}, dictionary selection~\cite{krause2010submodular} and Bayesian experimental design~\cite{krause2008near}, we prove the approximation guarantee of the GSEMO-C w.r.t. each notion of ``approximate submodularity", which measures how close a general set function $f$ is to submodularity.\\
      (1) \;In~\cite{krause2010submodular}, a set function $f$ is $\epsilon$-approximately submodular if the diminishing returns property holds with some deviation $\epsilon \geq 0$, i.e., for any $X \subseteq Y \subseteq V$ and $v \notin Y$, $f(X \cup \{v\})-f(X) \geq f(Y \cup \{v\})-f(Y)-\epsilon$. $f$ is submodular iff $\epsilon=0$. We prove that the GSEMO-C within $O(n^2(\log n+k))$ expected time finds a subset $X$ with $f(X) \geq (1-1/e)\cdot(\mathrm{OPT}-k\epsilon)$ (i.e., {\bf Theorem~\ref{theo-nonsubmodular-1}}).\\
      (2) \;In~\cite{das2011submodular}, the approximately submodular degree of a set function $f$ is characterized by a quantity $\gamma$ called submodularity ratio. $f$ is submodular iff $\gamma=1$. We prove that the GSEMO-C within $O(n^2(\log n+k))$ expected time finds a subset $X$ with $f(X) \geq (1-e^{-\gamma})\cdot \mathrm{OPT}$ (i.e., {\bf Theorem~\ref{theo-nonsubmodular-2}}).\\
      (3) \;In~\cite{horel2016maximization}, a set function $f$ is $\epsilon$-approximately submodular if there exists a submodular set function $g$ such that $\forall X \subseteq V$, $(1-\epsilon)g(X)\leq f(X)\leq (1+\epsilon)g(X)$. $f$ is submodular iff $\epsilon=0$. We prove that the GSEMO-C within $O(n^2(\log n+k))$ expected time finds a subset $X$ with $f(X) \geq \frac{1}{1+\frac{2k\epsilon}{1-\epsilon}}(1-e^{-1}(\frac{1-\epsilon}{1+\epsilon})^{k})\cdot \mathrm{OPT}$ (i.e., {\bf Theorem~\ref{theo-nonsubmodular-3}}).\vspace{-0.8em}
\end{itemize}

Because EAs are general-purpose algorithms which utilize a small amount of problem knowledge, we cannot expect them to beat the best problem-specific algorithm. For maximizing non-monotone submodular functions without constraints, the approximation ratio of nearly $1/3$ obtained by the GSEMO-C is worse than the best known one $1/2$, which was previously obtained by the double greedy algorithm~\cite{buchbinder2015tight}. For maximizing submodular and approximately monotone, or monotone and approximately submodular, functions with a size constraint, the approximation guarantees obtained by the GSEMO-C always reach the best known ones, which were previously obtained by the standard greedy algorithm~\cite{das2011submodular,horel2016maximization,krause2010submodular,krause2008near}. Note that the approximate guarantees here are achieved within polynomial time, while the GSEMO-C is actually an anytime algorithm and can find better solutions by running longer. If the running time is allowed to be infinite, the GSEMO-C can eventually find an optimal solution, since the mutation operator employed for reproduction is a global search operator leading to a positive probability of generating any solution in each iteration.

Friedrich and Neumann~\cite{friedrich2015maximizing} have proved that for maximizing monotone submodular functions with a size constraint, the GSEMO can achieve the approximation ratio of $(1-1/e)$, which is optimal in general~\cite{nemhauser1978best}. Without further assumptions or knowledge of the function, no polynomial time algorithm can provide a better approximation guarantee unless P$=$NP. Note that their result is generalized by our analysis for submodular and approximately monotone, or monotone and approximately submodular, functions. When the function is monotone submodular, the parameters characterizing the approximately monotone or submodular degree satisfy that $\epsilon=0$ and $\gamma=1$, and the approximation guarantees in Theorems~\ref{theo-apprx-monotone}-\ref{theo-nonsubmodular-3} all specialize to $1-1/e$, consistent with~\cite{friedrich2015maximizing}. Furthermore, our analysis may provide guidance under what conditions the GSEMO-C can have bounded approximation guarantees, even when the function is non-monotone or non-submodular. We have shown that the performance of the GSEMO-C is theoretically guaranteed for diverse applications with non-monotone or non-submodular objective functions, including sensor placement~\cite{krause2008near}, sparse regression~\cite{das2011submodular}, sparse support selection~\cite{elenberg2018restricted}, dictionary selection~\cite{krause2010submodular}, Bayesian experimental design~\cite{krause2008near} and determinantal function maximization~\cite{bian2017guarantees} (i.e., {\bf Corollaries~\ref{coro-application-1}-~\ref{coro-application-2}}). Our analytical results on general problem classes together with the previous ones~\cite{friedrich2015maximizing,reichel2010evolutionary} provide a theoretical explanation for the empirically good behaviors of EAs in diverse applications.

The rest of this paper is organized as follows. Sections~\ref{sec-problem} and~\ref{sec-algorithm} introduce the concerned problem classes and algorithm, respectively. Section~\ref{sec-problem-non-mono} presents the analysis for submodular function maximization with/without a size constraint. Section~\ref{sec-problem-non-sub} presents the analysis for monotone approximately submodular function maximization with a size constraint. Section~\ref{sec-conclusion} concludes the paper.

\section{Problem Classes}\label{sec-problem}

In this section, we introduce the problem classes studied in this paper. Let $\mathbb{R}$ and $\mathbb{R}^{+}$ denote the set of reals and non-negative reals, respectively. Given a finite non-empty set $V=\{v_1,v_2,\ldots,v_n\}$, we study the functions $f:2^V \rightarrow \mathbb{R}$ defined on subsets of $V$. A set function $f:2^V \rightarrow \mathbb{R}$ is monotone if for any $X \subseteq Y$, $f(X) \leq f(Y)$, which implies that adding more elements to a set never decreases the function value. Without loss of generality, we assume that monotone functions are normalized, i.e., $f(\emptyset)=0$. A set function $f$ is submodular~\cite{nemhauser1978analysis} if for any $X \subseteq Y \subseteq V$ and $v \notin Y$,
\begin{align}\label{def-submodular-1}
f(X \cup \{v\})-f(X) \geq f(Y \cup \{v\}) - f(Y);
\end{align}
or equivalently for any $X \subseteq Y \subseteq V$,
\begin{align}\label{def-submodular-2}
f(Y)-f(X) \leq \sum\nolimits_{v \in Y \setminus X} \big(f(X \cup \{v\})-f(X)\big).
\end{align}
Eq.~(\refeq{def-submodular-1}) intuitively represents the ``diminishing returns" property, i.e., adding an element to a set $X$ gives a larger benefit than adding the same element to a superset $Y$ of $X$. Eq.~(\refeq{def-submodular-2}) implies that the benefit by adding a set of elements to a set $X$ is smaller than the combined benefits of adding its individual elements to $X$. We assume that a set function $f$ is given by a value oracle, i.e., for a given subset $X$, an algorithm can query an oracle to obtain the value $f(X)$. In the following, let $\mathrm{OPT}$ denote the optimal function value.

\subsection{Submodular Function Maximization with/without a Size Constraint}

We consider the problem class of submodular function maximization, where the objective function is submodular, but not necessarily monotone. Both the situations without constraints as well as with a size constraint will be studied. Without loss of generality, we assume that the objective function $f$ is non-negative.

\begin{definition}[Non-monotone Submodular Function Maximization without Constraints]\label{def-Prob-nonmonotone-1}
Given a non-monotone and submodular function $f: 2^V \rightarrow \mathbb{R}^+$, to find a subset $X\subseteq V$ such that
$$
\arg \max\nolimits_{X \subseteq V}\quad f(X).
$$
\end{definition}

For the problem without constraints as presented in Definition~\ref{def-Prob-nonmonotone-1}, the goal is to maximize a non-monotone submodular set function. The best known approximation guarantee is $1/2$, which was achieved by the double greedy algorithm~\cite{buchbinder2015tight}. This problem generalizes many NP-hard combinatorial optimization problems, e.g., maximum cut~\cite{goemans1995improved}. Let $G=(V,E)$ be a graph with non-negative edge weights $w: E \rightarrow \mathbb{R}^+$, where $V$ and $E$ are the set of nodes and edges, respectively. For a subset $X$ of nodes, let $c(X)$ be the set of edges whose nodes are in $X$ and $V\setminus X$, respectively. The maximum cut problem is to find a subset $X$ of nodes maximizing the weighted cut function $\sum_{e \in c(X)} w(e)$, which is submodular but not monotone. More examples include maximum facility location~\cite{ageev19990}, variants of the maximum satisfiability problem~\cite{haastad2001some}, etc.

\begin{definition}[Approximately Monotone Submodular Function Maximization with a Size Constraint]\label{def-Prob-nonmonotone-2}
Given a submodular and approximately monotone function $f: 2^V \rightarrow \mathbb{R}^+$ and a budget $k$, to find a subset $X\subseteq V$ such that
$$
\arg \max\nolimits_{X \subseteq V}\quad f(X)\quad  \text{s.t.} \quad |X| \leq k.
$$
\end{definition}

For the problem with a size constraint as presented in Definition~\ref{def-Prob-nonmonotone-2}, the objective function, though not monotone, is required to be approximately monotone. The notion of ``approximate monotonicity"~\cite{krause2008near} was introduced to measure to what extent a general set function $f$ has the monotone property. As presented in Definition~\ref{def-approx-monotone}, a set function is $\epsilon$-approximately monotone implies that adding one element to a set decreases the function by at most $\epsilon$.

\begin{definition}[$\epsilon$-Approximate Monotonicity~\cite{krause2008near}]\label{def-approx-monotone}
Let $\epsilon\geq 0$. A set function $f: 2^V \rightarrow \mathbb{R}$ is $\epsilon$-approximately monotone if for any $X \subseteq V$ and $v \notin X$, $$f(X \cup \{v\}) \geq f(X)-\epsilon.$$
\end{definition}

It is easy to see that $f$ is monotone iff $\epsilon=0$. The standard greedy algorithm, which iteratively adds one element with the largest $f$ improvement until $k$ elements are selected, has been proved to achieve a subset $X$ with $f(X)\geq (1-1/e)\cdot (\mathrm{OPT}-k\epsilon)$~\cite{krause2008near}. A typical application is the sensor placement task, i.e., to select locations to install a limited number of sensors such that spatial phenomena can be monitored well. A common criterion to be maximized is the mutual information, which is submodular but not monotone. It has been shown~\cite{krause2008near} that a polynomial discretization level of locations can guarantee that the mutual information is $\epsilon$-approximately monotone. Note that there are applications (e.g., maximum entropy sampling~\cite{shewry1987maximum}) where the objective function is even not approximately monotone, i.e., $\epsilon$ is not well bounded.

\subsection{Monotone Approximately Submodular Function Maximization with a Size Constraint}\label{subsec-problem-non-sub}

Another concerned problem class is presented in Definition~\ref{def-Prob-nonsubmodular}. The goal is to find a subset with at most $k$ elements such that a given monotone and approximately submodular set function is maximized. Note that the situation without constraints is not considered, as it is trivial that an optimal solution is the whole set $V$ for monotone functions.

\begin{definition}[Monotone Approximately Submodular Function Maximization with a Size Constraint]\label{def-Prob-nonsubmodular}
Given a monotone and approximately submodular function $f: 2^V \rightarrow \mathbb{R}^+$ and a budget $k$, to find a subset $X\subseteq V$ such that
$$
\arg \max\nolimits_{X \subseteq V}\quad f(X)\quad  \text{s.t.} \quad |X| \leq k.
$$
\end{definition}

Several notions of ``approximate submodularity"~\cite{das2011submodular,horel2016maximization,krause2010submodular} were introduced to measure to what extent a set function $f$ has the submodular property. For each approximately submodular notion, the best known approximation guarantee was achieved by the standard greedy algorithm~\cite{das2011submodular,horel2016maximization,krause2010submodular}.

In~\cite{krause2010submodular}, the approximate submodularity as presented in Definition~\ref{def-approx-submodular-1} was defined based on the diminishing returns property, i.e., Eq.~(\refeq{def-submodular-1}). That is, the approximately submodular degree depends on how large a deviation of $\epsilon$ the diminishing returns property can hold with.

\begin{definition}[$\epsilon$-Diminishing Returns~\cite{krause2010submodular}]\label{def-approx-submodular-1}
Let $\epsilon\geq 0$. A set function $f: 2^V \rightarrow \mathbb{R}$ satisfies the $\epsilon$-diminishing returns property, if for any $X \subseteq Y \subseteq V$ and $v \notin Y$,
\begin{equation}
\begin{aligned}\label{eq-epsilon-diminishing}
f(X \cup \{v\})-f(X) \geq f(Y \cup \{v\})-f(Y)-\epsilon.
\end{aligned}
\end{equation}
\end{definition}

A set function $f$ satisfies the $\epsilon$-diminishing returns property implies that adding an element to a set $Y$ helps at most $\epsilon$ more than adding it to a subset $X$ of $Y$. It is easy to see that $f$ is submodular iff $\epsilon=0$. The standard greedy algorithm has been proved to find a subset $X$ with $f(X) \geq (1-1/e)\cdot(\mathrm{OPT}-k\epsilon)$~\cite{krause2010submodular}.

In~\cite{das2011submodular}, the submodularity ratio as presented in Definition~\ref{def-approx-submodular-2} was introduced to measure the closeness of a set function $f$ to submodularity.
\begin{definition}[Submodularity Ratio~\cite{das2011submodular}]\label{def-approx-submodular-2}
Let $f: 2^V \rightarrow \mathbb{R}$ be a set function. The submodularity ratio of $f$ with respect to a set $X \subseteq V$ and a parameter $l \geq 1$ is
$$
\gamma_{X,l}(f)=\min_{L \subseteq X, S: |S|\leq l, S \cap L =\emptyset} \frac{\sum_{v \in S} (f(L \cup \{v\})-f(L))}{f(L \cup S)-f(L)}.
$$
\end{definition}
Intuitively, the submodularity ratio captures how much more $f$ can increase by adding any set $S$ with at most $l$ elements to any subset $L$ of $X$, compared with the combined increment on $f$ by adding the individual elements of $S$ to $L$. It is easy to see from Eq.~(\refeq{def-submodular-2}) that $f$ is submodular iff $\gamma_{X,l}(f) = 1$ for any $X$ and $l$. When the meaning of $f$ is clear in the paper, we will omit $f$ and use $\gamma_{X,l}$ for short. The standard greedy algorithm has been proved to find a subset $X$ with $f(X) \geq (1-e^{-\gamma_{X,k}})\cdot \mathrm{OPT}$~\cite{das2011submodular}.

The above two notions of approximate submodularity are based on the equivalent statements, i.e., Eqs.~(\ref{def-submodular-1}) and~(\ref{def-submodular-2}), of submodularity, while in~\cite{horel2016maximization}, the approximate submodularity of a set function $f$ as presented in Definition~\ref{def-approx-submodular-3} was defined based on the closeness to other submodular functions.

\begin{definition}[$\epsilon$-Approximate Submodularity~\cite{horel2016maximization}]\label{def-approx-submodular-3}
Let $\epsilon\geq 0$. A set function $f: 2^V \rightarrow \mathbb{R}$ is $\epsilon$-approximately submodular if there exists a submodular set function $g$ such that $\forall X \subseteq V$, $$(1-\epsilon)\cdot g(X)\leq f(X)\leq (1+\epsilon)\cdot g(X).$$
\end{definition}

It is easy to see that $f$ is submodular iff $\epsilon=0$. The standard greedy algorithm has been proved to find a subset $X$ with $f(X) \geq \frac{1}{1+\frac{4k\epsilon }{(1-\epsilon)^2}}(1-e^{-1}(\frac{1-\epsilon}{1+\epsilon})^{2k})\cdot \mathrm{OPT}$~\cite{horel2016maximization}.

Note that a set function satisfying Eq.~(\refeq{eq-epsilon-diminishing}) was also originally said to be $\epsilon$-approximately submodular~\cite{krause2010submodular}. For a clearer presentation, we have renamed $\epsilon$-approximate submodularity to $\epsilon$-diminishing returns in Definition~\ref{def-approx-submodular-1}.

Next, we introduce five applications, i.e., sparse regression, sparse support selection, dictionary selection, Bayesian experimental design, and determinantal function maximization, that will be examined in this paper.

\subsubsection{Sparse Regression}

Sparse regression is to find a sparse approximation solution to the linear regression problem, where the solution vector can have only a few non-zero elements.

\begin{definition}[Sparse Regression~\cite{das2011submodular}]\label{def_sr} Given all observation variables $V=\{v_1,v_2,\ldots,v_n\}$, a predictor variable $z$ and a budget $k$, to find a set of at most $k$ observation variables maximizing the \emph{squared multiple correlation}~\citep{johnson2007applied}, i.e.,
$$
\mathop{\arg\max}\nolimits_{X \subseteq V} \left(R^2_{z,X}=\frac{\mathrm{Var}(z)-\mathrm{MSE}_{z,X}}{\mathrm{Var}(z)}\right) \quad \text{s.t.}\quad |X| \leq k,
$$
where $\mathrm{Var}(z)$ denotes the variance of $z$ and $\mathrm{MSE}_{z,X}=\min\nolimits_{\bm{\alpha} \in \mathbb{R}^{|X|}} \mathbb{E}[(z-\sum\nolimits_{i \in X} \alpha_i v_{i})^2]$ denotes the \emph{mean squared error}.
\end{definition}

Note that in the definition of $\mathrm{MSE}_{z,X}$, $X$ and its index set $\{i \mid v_i \in X\}$ are not distinguished for notational convenience. The objective function $R^2_{z,X}$, capturing the portion of the variance of $z$ explained by variables in $X$, is monotone but not necessarily submodular. Let $\mathbf{C}$ be the covariance matrix between all observation variables, and $\lambda_{\min}(\mathbf{C},m)$ be the smallest $m$-sparse eigenvalue of $\mathbf{C}$, i.e., the minimum eigenvalue of any $m \times m$ submatrix of $\mathbf{C}$. It has been proved~\cite{das2011submodular} that the submodularity ratio of $R^2_{z,X}$ can be lower bounded as $\gamma_{X,l} \geq \lambda_{\min}(\mathbf{C},|X|+l) \geq \lambda_{\min}(\mathbf{C},n)$.

\subsubsection{Sparse Support Selection}

Sparse support selection is a general sparsity constraint problem. The goal is to maximize general concave functions under sparsity constraints.

\begin{definition}[Sparse Support Selection~\cite{elenberg2018restricted}]\label{def_sss}
Given a ground set $V=\{v_1,v_2,\ldots,v_n\}$, a concave function $g: \mathbb{R}^n \rightarrow \mathbb{R}$ and a budget $k$, to find a subset $X \subseteq V$ such that
$$
\mathop{\arg\max}\nolimits_{X \subseteq V} \left(f(X)=\max_{\mathrm{supp}(\bm{s})\subseteq X}g(\bm{s})-g(\bm{0})\right) \quad \text{s.t.}\quad |X| \leq k,
$$
where $\mathrm{supp}(\bm{s})=\{v_i \mid s_i \neq 0\}$ denotes the support of $\bm{s} \in \mathbb{R}^n$.
\end{definition}

It is clear that sparse regression in Definition~\ref{def_sr} is a special case. More examples include low rank optimization~\cite{khanna2017approximation}, etc. Note that $g(\bm{0})$ is subtracted for normalization. The objective $f$ is monotone but not necessarily submodular. It has been proved~\cite{elenberg2018restricted} that when the concave function $g$ is $m$-strongly concave on all $(|X|+l)$-sparse vectors and $M$-smooth on all $(|X|+1)$-sparse vectors, the submodularity ratio of $f$ satisfies $\gamma_{X,l} \geq m/M$.

\subsubsection{Dictionary Selection}

Dictionary selection generalizes sparse regression to estimate multiple predictor variables.

\begin{definition}[Dictionary Selection~\cite{krause2010submodular}]\label{def.dictionary} Given all observation variables $V=\{v_1,v_2,\ldots,v_n\}$, multiple predictor variables $\{z_1,z_2,\ldots,z_m\}$ and two positive integers $k$ and $d$, to find a set of at most $k$ observation variables maximizing the \emph{average squared multiple correlation}, i.e.,
$$
\arg\max\nolimits_{X \subseteq V} \left(f(X)=\frac{1}{m}\sum\nolimits^m_{i=1} \max\nolimits_{ S \subseteq X, |S| \leq d} R^2_{z_i,S}\right)\quad \text{s.t.}\quad |X|\leq k.
$$
\end{definition}

The objective $f$ is monotone. It has been proved~\cite{krause2010submodular} that $f$ satisfies the $\epsilon$-diminishing returns property with $\epsilon \leq 4d\mu$, where $\mu$ denotes the coherence of $V$, i.e., the maximum absolute correlation between any pair of observation variables.

\subsubsection{Bayesian Experimental Design}

In Bayesian experimental design, the goal is to select observations to maximize the quality of parameter estimation. Krause \textit{et al.}~\cite{krause2008near} considered the Bayesian A-optimality objective function, in order to maximally reduce the variance of the posterior distribution over parameters in linear models. For a matrix $\mathbf{V} \in \mathbb{R}^{d\times n}$, let $\mathbf{V}_X \in \mathbb{R}^{d \times |X|}$ denote the submatrix of $\mathbf{V}$ with its columns indexed by $X \subseteq \{1,2,\ldots,n\}$.

\begin{definition}[Bayesian Experimental Design~\cite{krause2008near}]\label{def.bed}
Given an observation matrix $\mathbf{V}=[\bm{v}_1,\bm{v}_2,\ldots,\bm{v}_n] \in \mathbb{R}^{d\times n}$, a linear model $\bm{y}_X=\mathbf{V}^{\mathrm{T}}_{X} \bm{\theta}+\bm{w}$ and a budget $k$, where $\bm{\theta} \sim \mathcal{N}(0,\mathbf{\Lambda}^{-1})$, $\mathbf{\Lambda}=\beta^2\mathbf{I}_d$, the Gaussian noise $\bm{w} \sim \mathcal{N}(0,\sigma^2\mathbf{I}_{|X|})$, and $\mathbf{I}_j$ denotes the identity matrix of size $j$, to find a submatrix $\mathbf{V}_{X}$ of at most $k$ columns maximizing the Bayesian A-optimality objective function, i.e.,
$$
\arg\max\nolimits_{X \subseteq \{1,2,\ldots,n\}} \left(f(X)={\rm tr}(\mathbf{\Lambda}^{-1})-{\rm tr}((\mathbf{\Lambda}+\sigma^{-2}\mathbf{V}_X\mathbf{V}^{\mathrm{T}}_X)^{-1})\right)\quad \text{s.t.}\quad |X|\leq k,
$$
where $\rm{tr}(\cdot)$ denotes the trace of a matrix.
\end{definition}

Note that each $\bm{v}_i \in \mathbb{R}^{d}$ has been assumed to be normalized, i.e., $\|\bm{v}_i\|=1$. The objective $f$ is monotone, and the submodularity ratio satisfies $\gamma_{X,l} \geq \beta^2/(\|\mathbf{V}\|^2(\beta^2+\sigma^{-2}\|\mathbf{V}\|^2))$~\cite{bian2017guarantees}, where $\|\cdot\|$ denotes the spectral norm of a matrix.

\subsubsection{Determinantal Function Maximization}

In non-parametric learning, e.g., sparse Gaussian processes, the goal is to select a set of representative data points. Let $\mathbf{C} \in \mathbb{R}^{n \times n}$ be the covariance matrix parameterized by a positive definite kernel. Let $\mathbf{C}^X \in \mathbb{R}^{|X| \times |X|}$ denote the submatrix of $\mathbf{C}$ with its rows and columns indexed by $X \subseteq \{1,2,\ldots,n\}$. The determinantal function, $f(X)={\rm det}(\mathbf{I}_{|X|}+\sigma^{-2}\mathbf{C}^{X})$, is often involved in the objective functions of non-parametric learning, e.g.,~\cite{kulesza2012determinantal,herbrich2003fast}. Bian \textit{et al.}~\cite{bian2017guarantees} considered the problem of maximizing the determinantal function with a size constraint.

\begin{definition}[Determinantal Function Maximization~\cite{bian2017guarantees}]\label{def.dfm}
Given a data matrix $\mathbf{V}=[\bm{v}_1,\bm{v}_2,\ldots,\bm{v}_n] \in \mathbb{R}^{d\times n}$ with the covariance matrix $\mathbf{C} \in \mathbb{R}^{n \times n}$, and a budget $k$, to find a submatrix $\mathbf{V}_{X}$ of at most $k$ columns maximizing the determinantal function, i.e.,
$$
\arg\max\nolimits_{X \subseteq \{1,2,\ldots,n\}} \left(f(X)={\rm det}(\mathbf{I}_{|X|}+\sigma^{-2}\mathbf{C}^{X})\right)\quad \text{s.t.}\quad |X|\leq k,
$$
where $\sigma >0$.
\end{definition}

Though the logarithm of $f$ is monotone and submodular~\cite{krause2005near}, the determinantal function $f$ itself is not submodular. Let $\mathbf{A}$ denote $\mathbf{I}_{n}+\sigma^{-2}\mathbf{C}$. It has been proved~\cite{qian2018approximation} that $\gamma_{X,l} \geq (\lambda_n(\mathbf{A})-1)/((\lambda_1(\mathbf{A})-1)\prod^{n-1}_{i=1}\lambda_i(\mathbf{A}))$, where $\lambda_i(\cdot)$ denotes the $i$-th largest eigenvalue of a square matrix.

\section{Multi-objective Evolutionary Algorithms}\label{sec-algorithm}

To examine the performance of EAs optimizing the problem classes in Definitions~\ref{def-Prob-nonmonotone-1},~\ref{def-Prob-nonmonotone-2} and~\ref{def-Prob-nonsubmodular}, we consider a simple multi-objective EA called GSEMO-C, which is slightly modified from the algorithm GSEMO widely used in previous theoretical analyses~\cite{bian2018general,friedrich2010approximating,neumann2011computing,qian2013analysis}. The GSEMO generates a new solution (i.e., set) by bit-wise mutation in each iteration, whereas the GSEMO-C generates this new set as well as its complement in each iteration. Note that the letter ``C" in GSEMO-C denotes ``complement".

The GSEMO-C as presented in Algorithm~\ref{algo:GSEMO} is used for maximizing multi-objective pseudo-Boolean problems with $m$ objective functions $f_i: \{0,1\}^n \rightarrow \mathbb{R}$ ($1 \leq i\leq m$). Note that a pseudo-Boolean function $f: \{0,1\}^n \rightarrow \mathbb{R}$ naturally characterizes a set function $f: 2^{V} \rightarrow \mathbb{R}$, since a subset $X$ of $V$ can be naturally represented by a Boolean vector $\bm{x} \in \{0,1\}^n$, where the $i$-th bit $x_i=1$ means that $v_i \in X$, and $x_i=0$ means that $v_i \notin X$. Throughout the paper, we will not distinguish $\bm{x}\in \{0,1\}^n$ and its corresponding subset for notational convenience.

Before introducing the GSEMO-C, we first introduce some basic concepts in multi-objective maximization. Since the objectives to be maximized are usually conflicted, there is no canonical complete order on the solution space. The comparison between two solutions relies on the \emph{domination} relationship. For two solutions $\bm x$ and $\bm{x}'$, $\bm{x}$ \emph{weakly dominates} $\bm{x}'$ (i.e., $\bm{x}$ is \emph{better} than $\bm{x}'$, denoted by $\bm{x} \succeq \bm{x}'$) if $\forall 1 \leq i \leq m$, $f_i(\bm{x}) \geq f_i(\bm{x}')$; ${\bm{x}}$ \emph{dominates} $\bm{x}'$ (i.e., $\bm{x}$ is \emph{strictly better} than $\bm{x}'$, denoted by $\bm{x} \succ \bm{x}'$) if ${\bm{x}} \succeq \bm{x}'$ and $f_i(\bm{x}) > f_i(\bm{x}')$ for some $i$. But if neither $\bm{x}$ is better than $\bm{x}'$ nor $\bm{x}'$ is better than $\bm{x}$, we say that they are \emph{incomparable}. A solution is \emph{Pareto optimal} if no other solution dominates it. The set of objective vectors of all the Pareto optimal solutions constitutes the \emph{Pareto front}. The goal of multi-objective optimization is to find the Pareto front, that is, to find at least one corresponding solution for each objective vector in the Pareto front.

The procedure of the GSEMO-C is presented in Algorithm~\ref{algo:GSEMO}. Starting from a random solution (lines~1-2), it iteratively tries to improve the quality of the solutions in the population $P$ (lines~3-12). In each iteration, a new solution $\bm{x}'$ is generated by randomly flipping bits of an archived solution $\bm{x}$ selected from the current population $P$ (lines~4-5); the complementary set $\bm{x}''=V\setminus \bm{x}'$ of $\bm{x}'$ is also generated (line~6); these two newly generated solutions are then used to update the population $P$ (lines~7-11). In the updating procedure, if $\bm{y} \in \{\bm{x}',\bm{x}''\}$ is not dominated by (i.e., not strictly worse than) any previously archived solution (line~8), it will be added into $P$, and meanwhile those previously archived solutions weakly dominated by (i.e., worse than) $\bm{y}$ will be removed from $P$ (line~9). It is easy to see that the population $P$ will always contain a set of incomparable solutions due to the domination-based comparison.

\begin{algorithm}[t]\caption{GSEMO-C}
    Given $m$ pseudo-Boolean objective functions $f_1,f_2,\ldots,f_m$, where $f_i: \{0,1\}^n \rightarrow \mathbb{R}$, the GSEMO-C consists of the following steps:\label{algo:GSEMO}
    \begin{algorithmic}[1]
    \STATE Choose $\bm{x} \in \{0,1\}^n$ uniformly at random;
    \STATE $P \gets \{\bm x\}$;
    \STATE \textbf{repeat}
    \STATE \quad Choose $\bm x$ from $P$ uniformly at random;
    \STATE \quad Create $\bm{x}'$ by flipping each bit of $\bm x$ with probability $1/n$;
    \STATE \quad Create $\bm{x}'' \gets V \setminus \bm{x}'$;
    \STATE \quad \textbf{for} $\bm{y} \in \{\bm{x}',\bm{x}''\}$
    \STATE \qquad \textbf{if} \, {$\nexists \bm z \in P$ such that $\bm z \succ \bm {y}$} \,\textbf{then}
    \STATE \qquad \qquad $P \gets (P \setminus \{\bm z \in P \mid \bm {y} \succeq \bm z\}) \cup \{\bm {y}\}$
    \STATE \qquad \textbf{end if}
    \STATE \quad \textbf{end for}
    \STATE \textbf{until} some criterion is met
    \end{algorithmic}
\end{algorithm}

Compared with the GSEMO~\cite{bian2018general,friedrich2010approximating,neumann2011computing,qian2013analysis}, the GSEMO-C additionally performs line~6, i.e., generates the complement $\bm{x}''$ of the new solution $\bm{x}'$. Also, both $\bm{x}'$ and $\bm{x}''$, rather than only $\bm{x}'$, are used to update the population.

For optimizing the problems in Definitions~\ref{def-Prob-nonmonotone-1},~\ref{def-Prob-nonmonotone-2} and~\ref{def-Prob-nonsubmodular} by the GSEMO-C, each problem is transformed into a bi-objective maximization problem
$$
\arg\max\nolimits_{\bm{x} \in \{0,1\}^n} \quad  (f_1(\bm{x}),f_2(\bm{x})),
$$
where $f_1(\bm x) = f(\bm x)$ and $f_2(\bm x) = -|\bm{x}|$. That is, the GSEMO-C is to maximize the objective function $f$ and minimize the subset size $|\bm{x}|$ simultaneously. Note that $|\bm{x}|=\sum^n_{i=1}x_i$ denotes the number of 1-bits of a solution $\bm{x}$. When the GSEMO-C terminates after running a number of iterations, the best solution w.r.t. the original single-objective problem in the resulting population $P$ will be returned. For the problem in Definition~\ref{def-Prob-nonmonotone-1}, the solution with the largest $f$ value in $P$ (i.e., $\arg\max_{\bm{x} \in P} f(\bm{x})$) will be returned. For the problem in Definitions~\ref{def-Prob-nonmonotone-2} and~\ref{def-Prob-nonsubmodular}, the solution with the largest $f$ value satisfying the size constraint in $P$ (i.e., $\arg\max_{\bm{x} \in P, |\bm{x}|\leq k} f(\bm{x})$) will be returned. The running time of the GSEMO-C is measured by the number of fitness evaluations until the best solution w.r.t. the original single-objective problem in the population reaches some approximation guarantee for the first time. Since only the new solutions $\bm{x}'$ and $\bm{x}''$ need to be evaluated in each iteration of the GSEMO-C, the number of fitness evaluations is just the double of the number of iterations of the GSEMO-C.

Note that multi-objective optimization here is just an intermediate process, which has been shown helpful for solving some single-objective combinatorial optimization problems~\cite{friedrich2010approximating,neumann2006minimum,neumann2011computing,qian.ijcai15}. We still focus on the quality of the best solution w.r.t. the original single-objective problem, in the population found by the GSEMO-C, rather than the quality of the population w.r.t. the transformed bi-objective optimization problem.

\section{Analysis on Submodular Function Maximization}\label{sec-problem-non-mono}

In this section, we theoretically analyze the performance of the GSEMO-C for maximizing submodular, but not necessarily monotone, functions.

\subsection{Without Constraints}

First, we consider the problem class in Definition~\ref{def-Prob-nonmonotone-1}, i.e., maximizing non-monotone submodular functions without constraints. We prove in Theorem~\ref{theo-nonmonotone} that the GSEMO-C can achieve a constant approximation ratio of nearly $1/3$ in polynomial expected time.

\begin{theorem}\label{theo-nonmonotone}
For maximizing a non-monotone submodular function without constraints, the expected running time of the GSEMO-C until finding a solution $\bm{x}$ with $f(\bm{x}) \geq (\frac{1}{3}-\frac{\epsilon}{n}) \cdot \mathrm{OPT}$ is $O(\frac{1}{\epsilon}n^4\log n)$, where $\epsilon>0$.
\end{theorem}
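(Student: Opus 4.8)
The plan is to track the largest objective value present in the population, $f_{\max} := \max_{\bm{x}\in P} f(\bm{x})$, argue it never decreases, and show it climbs geometrically until the best solution becomes a ``local optimum'' in the sense of Lemma~\ref{lemma-nonmonotone-mid}. First I would record two structural facts. Since the two objectives are $(f(\bm{x}), -|\bm{x}|)$ and $P$ retains only mutually incomparable solutions, $P$ holds at most one solution of each size, so $|P| \le n+1$ throughout; and a solution attaining $f_{\max}$ can be removed from $P$ only by one that weakly dominates it, whose $f$-value is at least $f_{\max}$, so $f_{\max}$ is non-decreasing. Both facts make each targeted mutation cheap: selecting a prescribed member of $P$ costs $\ge 1/(n+1)$, and realizing a prescribed single-bit flip with no other bit flipped costs $\frac1n(1-\frac1n)^{n-1} \ge \frac{1}{en}$, so any chosen insertion/deletion occurs with probability $\ge \frac{1}{en(n+1)}$, i.e. in expected $O(n^2)$ iterations.

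Next I would secure a good starting value. Let $\gamma := \max_{v\in V} f(\{v\})$. Applying submodularity (Eq.~\eqref{def-submodular-2}) with $X=\emptyset$ and $Y=O$ for an optimal set $O$ gives $OPT - f(\emptyset) \le \sum_{v\in O}(f(\{v\})-f(\emptyset)) \le |O|\gamma - |O|f(\emptyset)$; since $f$ is non-negative and $|O|\ge 1$, this yields $OPT \le |O|\gamma \le n\gamma$, hence $\gamma \ge OPT/n$. To push $f_{\max}$ up to $\gamma$ I would first drive $\emptyset$ into $P$: the minimum size present is non-increasing (a strictly smaller solution is never dominated), and lowering it by deleting one element succeeds with probability $\Omega(1/n)$ per step, so summing the harmonic series reaches $\emptyset$ in expected $O(n^2\log n)$ iterations. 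Once $\emptyset\in P$ it stays (nothing dominates it), and inserting the best single element $v^*$ then yields a solution with $f=\gamma$ in a further expected $O(n^2)$ iterations. Thus after $O(n^2\log n)$ expected iterations $f_{\max}\ge \gamma \ge OPT/n$.

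Then comes the core geometric phase. Whenever the current $f_{\max}$-solution $\bm{x}$ is not locally optimal, some insertion or deletion yields $\bm{x'}$ with $f(\bm{x'}) > (1+\frac{\epsilon}{n^2})f(\bm{x}) > f_{\max}$; such $\bm{x'}$ is dominated by nobody and enters $P$, multiplying $f_{\max}$ by at least $1+\frac{\epsilon}{n^2}$. Since $f_{\max}$ lies in $[OPT/n, OPT]$, at most $\log_{1+\epsilon/n^2} n = O(\frac{n^2}{\epsilon}\log n)$ such improvements can occur, each in expected $O(n^2)$ iterations, for a total of $O(\frac{n^4}{\epsilon}\log n)$. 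When no further improvement is possible the $f_{\max}$-solution $\bm{x}$ meets the hypothesis of Lemma~\ref{lemma-nonmonotone-mid}, so $\max\{f(\bm{x}), f(V\setminus\bm{x})\} \ge (\frac13-\frac{\epsilon}{n})OPT$. If $f(\bm{x})$ already satisfies the bound we are done; otherwise $f(V\setminus\bm{x}) > f_{\max}$, and here the complement step of the algorithm is essential: selecting $\bm{x}$ and flipping no bit makes the algorithm generate $\bm{x''}=V\setminus\bm{x}$, which is then accepted into $P$ within expected $O(n)$ iterations. Summing the phases and doubling for the two evaluations per iteration gives the claimed $O(\frac{1}{\epsilon}n^4\log n)$.

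I would flag the second paragraph as the main obstacle: making the geometric phase terminate in $O(\frac{n^2}{\epsilon}\log n)$ steps rests entirely on a polynomial lower bound $f_{\max}\ge OPT/n$ to start from, which is why the submodular estimate $\gamma\ge OPT/n$ together with the cheap but non-obvious task of planting first $\emptyset$ and then $\{v^*\}$ in the population is doing the real work; without such a floor the number of multiplicative steps could be unbounded. The remaining care is bookkeeping: checking that every improving or complementary solution is non-dominated and therefore truly enters $P$, and that $f_{\max}$ is monotone so progress is never lost.
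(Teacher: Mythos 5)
Your proof is correct and follows essentially the same route as the paper's: the same three-phase decomposition (first drive $\emptyset$ into $P$, then the best singleton $v^*$ to get $f_{\max}\ge OPT/n$, then geometric growth of $f_{\max}$ by factors of $1+\frac{\epsilon}{n^2}$ until a $(1+\frac{\epsilon}{n^2})$-approximate local optimum), the same bound $P_{\max}\le n+1$, the same $\Omega(1/(nP_{\max}))$ probability for each targeted step, and the same invocation of Lemma~\ref{lemma-nonmonotone-mid}. The only difference is in the endgame: the paper gets the complement of the local optimum for free because lines 5--6 evaluate $\bm{x'}$ and $V\setminus\bm{x'}$ in the same iteration, whereas you spend an extra expected $O(n)$ iterations on a no-flip mutation to plant $V\setminus\bm{x}$ in $P$; this is a minor (and if anything slightly more careful) variant, since it also covers the case where the current best solution was never itself produced in line~5.
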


The proof relies on Lemma~\ref{lemma-nonmonotone-mid}, which shows that it is always possible to improve a solution by inserting or deleting one element until a good approximation has been achieved. This lemma is extracted from Lemma~3.4 in~\cite{feige2011maximizing}.

\begin{lemma}[\cite{feige2011maximizing}]\label{lemma-nonmonotone-mid}
Let $\bm{x}\in \{0,1\}^n$ be a solution such that no solution $\bm{x}'$ with the objective value $f(\bm{x}')>(1+\frac{\epsilon}{n^2})\cdot f(\bm{x})$ can be achieved by inserting one element into $\bm{x}$ or deleting one element from $\bm{x}$, where $\epsilon >0$. Then $\max\{f(\bm{x}),f(V\setminus \bm{x})\} \geq (\frac{1}{3}-\frac{\epsilon}{n})\cdot \mathrm{OPT}$.
\end{lemma}

Inspired from the proof of Theorem~4 in~\cite{friedrich2015maximizing}, the intuition of our proof is to follow the behavior of the local search algorithm~\cite{feige2011maximizing}, which iteratively tries to improve a solution by inserting or deleting one element.

\begin{myproof}{Theorem~\ref{theo-nonmonotone}}
We divide the optimization process into three phases: (1) starts from an initial random solution and finishes after finding the all-0s solution $\bm{0}$ (i.e., $\emptyset$); (2) starts after phase~(1) and finishes after finding a solution with the objective value at least $\mathrm{OPT}/n$; (3) starts after phase~(2) and finishes after finding a solution with the desired approximation guarantee. We analyze the expected running time of each phase, respectively, and then sum up them to get an upper bound on the total expected running time of the GSEMO-C.

For phase (1), we consider the minimum number of 1-bits of the solutions in the population $P$, denoted by $J_{\min}$. That is, $J_{\min}=\min\{|\bm{x}| \mid \bm{x} \in P\}$. Assume that currently $J_{\min}=i>0$, and let $\bm{x}$ be the corresponding solution, i.e., $|\bm{x}|=i$. It is easy to see that $J_{\min}$ cannot increase because $\bm{x}$ cannot be weakly dominated by a solution with more 1-bits. In each iteration of the GSEMO-C, to decrease $J_{\min}$, it is sufficient to select $\bm{x}$ in line~4 of Algorithm~\ref{algo:GSEMO} and flip only one 1-bit of $\bm{x}$ in line~5. This is because the newly generated solution $\bm{x}'$ now has the smallest number of 1-bits (i.e., $|\bm{x}'|=i-1$) and no solution in $P$ can dominate it; thus it will be included into $P$. Let $P_{\max}$ denote the largest size of $P$ during the run of the GSEMO-C. The probability of selecting $\bm{x}$ in line~4 of Algorithm~\ref{algo:GSEMO} is $\frac{1}{|P|} \geq \frac{1}{P_{\max}}$ due to uniform selection, and the probability of flipping only one 1-bit of $\bm{x}$ in line~5 is $\frac{i}{n}(1-\frac{1}{n})^{n-1} \geq \frac{i}{en}$, since $\bm{x}$ has $i$ 1-bits. Thus, the probability of decreasing $J_{\min}$ by at least 1 in each iteration of the GSEMO-C is at least $\frac{i}{enP_{\max}}$. Note that $J_{\min} \leq n$. We can then get that the expected number of iterations of phase (1) (i.e., $J_{\min}$ reaches 0) is at most
$$
\sum^{n}_{i=1} \frac{enP_{\max}}{i}=O(nP_{\max}\log n).
$$
Note that the solution $\bm{0}$ will always be kept in $P$ once generated, since it has the smallest subset size 0 and no other solution can weakly dominate it.

For phase (2), it is sufficient that in one iteration of the GSEMO-C, the solution $\bm{0}$ is selected in line~4, and only a specific 0-bit corresponding to the best single element $v^*$ (i.e., $v^* \in \arg\max_{v \in V} f(\{v\})$) is flipped in line~5. That is, the solution $\{v^*\}$ is generated. Since the objective function $f$ is submodular and non-negative, we easily have $f(\{v^*\})\geq \mathrm{OPT}/n$. After generating the solution $\{v^*\}$, it will be used to update the population $P$, which makes $P$ always contain a solution $\bm{z}$ weakly dominating $\{v^*\}$, i.e., $f(\bm{z}) \geq f(\{v^*\}) \geq \mathrm{OPT}/n$ and $|\bm{z}|\leq |\{v^*\}|=1$. Thus, we only need to analyze the expected number of iterations of the GSEMO-C until generating the solution $\{v^*\}$. Since the probability of selecting $\bm{0}$ in line~4 of the GSEMO-C is at least $\frac{1}{P_{\max}}$ and the probability of flipping only a specific 0-bit in line~5 is $\frac{1}{n}(1-\frac{1}{n})^{n-1}\geq \frac{1}{en}$, the expected number of iterations of phase (2) is $O(nP_{\max})$.

As in~\cite{feige2011maximizing}, we call a solution $\bm{x}$ a $(1+\alpha)$-approximate local optimum if $f(\bm{x} \setminus \{v\}) \leq (1+\alpha)\cdot f(\bm{x})$ for any $v \in \bm{x}$ and $f(\bm{x} \cup \{v\}) \leq (1+\alpha)\cdot f(\bm{x})$ for any $v \notin \bm{x}$. By Lemma~\ref{lemma-nonmonotone-mid}, we know that a $(1+\frac{\epsilon}{n^2})$-approximate local optimum $\bm{x}$ satisfies $\max\{f(\bm{x}),f(V\setminus \bm{x})\} \geq (\frac{1}{3}-\frac{\epsilon}{n})\cdot \mathrm{OPT}$. For phase~(3), we thus only need to analyze the expected number of iterations until generating a $(1+\frac{\epsilon}{n^2})$-approximate local optimum $\bm{x}'$. This is because both $\bm{x}'$ and $V\setminus \bm{x}'$ will be used to update the population $P$, and then for either one of $\bm{x}'$ and $V\setminus \bm{x}'$, $P$ will always contain one solution weakly dominating it, which implies that $\max\{f(\bm{x})\mid \bm{x} \in P\} \geq \max\{f(\bm{x}'),f(V\setminus \bm{x}')\} \geq (\frac{1}{3}-\frac{\epsilon}{n})\cdot \mathrm{OPT}$. We then consider the largest $f$ value of the solutions in the population $P$, denoted by $J_{\max}$. That is, $J_{\max}=\max\{f(\bm{x}) \mid \bm{x} \in P\}$. After phase (2), $J_{\max}\geq \mathrm{OPT}/n$, and let $\bm{x}$ be the corresponding solution, i.e., $f(\bm{x})=J_{\max}$. It is obvious that $J_{\max}$ cannot decrease, because $\bm{x}$ cannot be weakly dominated by a solution with a smaller $f$ value. As long as $\bm{x}$ is not a $(1+\frac{\epsilon}{n^2})$-approximate local optimum, we know that a new solution $\bm{x}'$ with $f(\bm{x}')>(1+\frac{\epsilon}{n^2})f(\bm{x})=(1+\frac{\epsilon}{n^2})J_{\max}$ can be generated through selecting $\bm{x}$ in line~4 of Algorithm~\ref{algo:GSEMO} and flipping only one specific 1-bit (i.e., deleting one specific element from $\bm{x}$) or one specific 0-bit (i.e., adding one specific element into $\bm{x}$) in line~5, the probability of which is at least $\frac{1}{P_{\max}}\cdot \frac{1}{n}(1-\frac{1}{n})^{n-1}\geq \frac{1}{enP_{\max}}$. Since $\bm{x}'$ now has the largest $f$ value and no other solution in $P$ can dominate it, it will be included into $P$. Thus, $J_{\max}$ can increase by at least a factor of $(1+\frac{\epsilon}{n^2})$ with probability at least $\frac{1}{enP_{\max}}$ in each iteration. Such an increase on $J_{\max}$ is called a successful step. Thus, a successful step needs at most $enP_{\max}$ expected number of iterations. It is also easy to see that until generating a $(1+\frac{\epsilon}{n^2})$-approximate local optimum, the number of successful steps is at most $\log_{1+\frac{\epsilon}{n^2}} \frac{\mathrm{OPT}}{\mathrm{OPT}/n} = O(\frac{1}{\epsilon}n^2\log n)$. Thus, the expected number of iterations of phase (3) is at most
$$
enP_{\max} \cdot O\left(\frac{1}{\epsilon}n^2\log n\right) =  O\left(\frac{1}{\epsilon}n^3P_{\max}\log n\right).
$$

From the procedure of the GSEMO-C, we know that the solutions maintained in $P$ must be incomparable. Thus, each value of one objective can correspond to at most one solution in $P$. Because the second objective $f_2(\bm{x})=-|\bm{x}|$ can only belong to $\{0,-1,\ldots,-n\}$, we have $P_{\max} \leq n+1$. Hence, the expected running time of the GSEMO-C for finding a solution with the objective function value at least $(\frac{1}{3}-\frac{\epsilon}{n})\cdot \mathrm{OPT}$ is
$$O(nP_{\max}\log n)+O(nP_{\max})+O\left(\frac{1}{\epsilon}n^3P_{\max}\log n\right)=O\left(\frac{1}{\epsilon}n^4\log n\right).$$
\end{myproof}

Note that in parallel with our work, Friedrich \textit{et al.}~\cite{friedrich2018heavy} analyzed the performance of the (1+1)-EA using a new and novel mutation operator for solving this problem class. The mutation operator is a heavy-tailed mutation operator, which samples $m \in \{1,2,\ldots,n\}$ according to a power-law distribution, and then flips $m$ bits of a solution chosen uniformly at random. The power-law distribution has a parameter $\beta>1$, that can be chosen arbitrarily close to 1. They proved that the (1+1)-EA can achieve an approximation ratio of $(\frac{1}{3}-\frac{\epsilon}{n})$ in $O(\frac{1}{\epsilon}n^3\log\frac{n}{\epsilon}+n^{\beta})$ expected running time.

\subsection{With a Size Constraint}

Next, we consider the problem class in Definition~\ref{def-Prob-nonmonotone-2}, i.e., maximizing submodular and approximately monotone functions with a size constraint. As in previous analyses (e.g.,~\cite{buchbinder2014submodular,friedrich2018greedy}), we may assume that there is a set $D$ of $k$ ``dummy" elements whose marginal contribution to any set is 0, i.e., for any $X \subseteq V$, $f(X)=f(X\setminus D)$. Theorem~\ref{theo-apprx-monotone} gives the approximation guarantee of the GSEMO-C.

\begin{theorem}\label{theo-apprx-monotone}
For maximizing a submodular function $f$ with a size constraint $k$, where $f$ is $\epsilon$-approximately monotone as in Definition~\ref{def-approx-monotone}, the expected running time of the GSEMO-C until finding a solution $\bm{x}$ with $|\bm{x}| \leq k$ and $f(\bm{x}) \geq (1-1/e) \cdot (\mathrm{OPT}-k\epsilon)$ is $O(n^2(\log n+k))$.
\end{theorem}

The proof relies on Lemma~\ref{lemma-apprx-monotone}, that for any $\bm{x} \in \{0,1\}^n$ with $|\bm{x}|<k$, there always exists another element, the inclusion of which can bring an improvement on $f$ roughly proportional to the current distance to the optimum.

\begin{lemma}\label{lemma-apprx-monotone}
Assume that a set function $f$ is submodular and $\epsilon$-approximately monotone as in Definition~\ref{def-approx-monotone}. For any $\bm{x} \in \{0,1\}^n$ with $|\bm{x}| < k$, there exists one element $v \notin \bm{x}$ such that
\begin{align}\label{eq-mid-4}
& f(\bm{x} \cup \{v\})-f(\bm{x}) \geq \frac{1}{k} (\mathrm{OPT}-f(\bm{x}))-\epsilon,
\end{align}
where $k$ is the size constraint.
\end{lemma}
\begin{proof}
Let $\bm{x}^*$ be an optimal solution, i.e., $f(\bm{x}^*)=\mathrm{OPT}$. We denote the elements in $\bm{x}\setminus \bm{x}^*$ by $u^*_1,u^*_2,\ldots,u^*_m$, where $m=|\bm{x}\setminus \bm{x}^*|$. Note that $m< k$ as $|\bm{x}|< k$. Because $f$ is $\epsilon$-approximately monotone, we have
\begin{align}\label{eq-mid-1}
f(\bm{x}^* \cup \bm{x})=f(\bm{x}^* \cup \{u^*_1,\ldots,u^*_m\}) &\geq f(\bm{x}^* \cup \{u^*_1,\ldots,u^*_{m-1}\})-\epsilon\\
&\geq \cdots \geq f(\bm{x}^*)-m\epsilon \geq f(\bm{x}^*)-k\epsilon,
\end{align}
where the first three inequalities hold by Definition~\ref{def-approx-monotone}.

We denote the elements in $\bm{x}^*\setminus \bm{x}$ by $v^*_1,v^*_2,\ldots,v^*_l$, where $l=|\bm{x}^*\setminus \bm{x}| \leq k$. Then, we have
\begin{align}\label{eq-mid-3}
f(\bm{x}^*)-f(\bm{x})-k\epsilon &\leq f(\bm{x} \cup \bm{x}^*)-f(\bm{x})\\
&=f(\bm{x} \cup \{v^*_1,\ldots,v^*_l\})-f(\bm{x})\\
&= \sum^{l}_{j=1} \left(f(\bm{x} \cup \{v^*_1,\ldots,v^*_j\})-f(\bm{x} \cup \{v^*_1,\ldots,v^*_{j-1}\})\right)\\
&\leq \sum^{l}_{j=1} \left(f(\bm{x} \cup \{v^*_j\})-f(\bm{x})\right),
\end{align}
where the first inequality holds by Eq.~(\refeq{eq-mid-1}), the first equality holds by the definition of $\bm{x}^*\setminus \bm{x}$, and the last inequality holds by Eq.~(\refeq{def-submodular-1}) since $f$ is submodular. Let $v^*=\arg \max_{v \in V \setminus \bm{x}} f(\bm{x} \cup \{v\})$. Eq.~(\refeq{eq-mid-3}) implies that
$$
f(\bm{x}^*)-f(\bm{x})-k\epsilon \leq l \left(f(\bm{x} \cup \{v^*\})-f(\bm{x})\right).
$$
Due to the existence of $k$ dummy elements and $|\bm{x}|<k$, there must exist one dummy element $v \notin \bm{x}$ satisfying $f(\bm{x} \cup \{v\})-f(\bm{x}) = 0$; this implies that $f(\bm{x} \cup \{v^*\})-f(\bm{x}) \geq 0$. As $l \leq k$, we have
$$
f(\bm{x}^*)-f(\bm{x})-k\epsilon \leq k \left(f(\bm{x} \cup \{v^*\})-f(\bm{x})\right),
$$
leading to
$$
f(\bm{x} \cup \{v^*\})-f(\bm{x}) \geq \frac{1}{k} (\mathrm{OPT}-f(\bm{x}))-\epsilon.
$$
\end{proof}

Inspired from the proof of Theorem~2 in~\cite{friedrich2015maximizing}, our proof idea is to follow the behavior of the standard greedy algorithm, which iteratively adds one element with the currently largest improvement on $f$.

\begin{myproof}{Theorem~\ref{theo-apprx-monotone}}
We divide the optimization process into two phases: (1) starts from an initial random solution and finishes after finding the special solution $\bm{0}$; (2) starts after phase (1) and finishes after finding a solution with the desired approximation guarantee. As the analysis of phase~(1) in the proof of Theorem~\ref{theo-nonmonotone}, we know that the population $P$ will contain the solution $\bm{0}$ after $O(nP_{\max}\log n)$ iterations in expectation.

For phase (2), we consider a quantity $J_{\max}$, which is defined as $$
J_{\max}=\max\left\{j \in \{0,1,\ldots,k\} \mid \exists \bm{x} \in P: |\bm{x}| \leq j \wedge f(\bm{x}) \geq \left(1-\left(1-\frac{1}{k}\right)^j\right) \cdot (\mathrm{OPT}-k\epsilon)\right\}.$$ That is, $J_{\max}$ denotes the maximum value of $j \in \{0,1,\ldots,k\}$ such that in the population $P$, there exists a solution $\bm{x}$ with $|\bm{x}| \leq j$ and $f(\bm{x}) \geq (1-(1-\frac{1}{k})^j) \cdot (\mathrm{OPT}-k\epsilon)$. We analyze the expected number of iterations until $J_{\max}=k$, which implies that there exists one solution $\bm{x}$ in $P$ satisfying that $|\bm{x}| \leq k$ and $f(\bm{x}) \geq (1-(1-\frac{1}{k})^k) \cdot (\mathrm{OPT}-k\epsilon) \geq (1-1/e) \cdot (\mathrm{OPT}-k\epsilon)$. That is, the desired approximation guarantee is reached.

The current value of $J_{\max}$ is at least 0, since the population $P$ contains the solution $\bm{0}$, which will always be kept in $P$ once generated. Assume that currently $J_{\max}=i <k$. Let $\bm{x}$ be a corresponding solution with the value $i$, i.e., $|\bm{x}|\leq i$ and $f(\bm{x}) \geq (1-(1-\frac{1}{k})^i) \cdot (\mathrm{OPT}-k\epsilon)$. It is easy to see that $J_{\max}$ cannot decrease because cleaning $\bm{x}$ from $P$ (line~9 of Algorithm~\ref{algo:GSEMO}) implies that $\bm{x}$ is weakly dominated by a newly generated solution $\bm{y}$, which must satisfy that $|\bm{y}| \leq |\bm{x}|$ and $f(\bm{y})\geq f(\bm{x})$. By Lemma~\ref{lemma-apprx-monotone}, we know that flipping one specific 0-bit of $\bm{x}$ (i.e., adding a specific element) can generate a new solution $\bm{x}'$, which satisfies $f(\bm{x}')-f(\bm{x}) \geq \frac{1}{k} (\mathrm{OPT}-f(\bm{x}))-\epsilon$. Then, we have
$$
f(\bm{x}') \geq \left(1-\frac{1}{k}\right)f(\bm{x})+\frac{1}{k}\cdot \mathrm{OPT} -\epsilon\geq \left(1-\left(1-\frac{1}{k}\right)^{i+1}\right) \cdot (\mathrm{OPT}-k\epsilon),
$$
where the last inequality is derived by $f(\bm{x}) \geq (1-(1-\frac{1}{k})^i) \cdot (\mathrm{OPT}-k\epsilon)$. Since $|\bm{x}'|=|\bm{x}|+1 \leq i+1$, $\bm{x}'$ will be included into $P$; otherwise, $\bm{x}'$ must be dominated by one solution in $P$ (line~8 of Algorithm~\ref{algo:GSEMO}), and this implies that $J_{\max}$ has already been larger than $i$, contradicting with the assumption $J_{\max}=i$. After including $\bm{x}'$, $J_{\max} \geq i+1$. Thus, $J_{\max}$ can increase by at least 1 in one iteration with probability at least $\frac{1}{P_{\max}} \cdot \frac{1}{n}(1-\frac{1}{n})^{n-1} \geq \frac{1}{enP_{\max}}$, where $\frac{1}{P_{\max}}$ is a lower bound on the probability of selecting $\bm{x}$ in line~4 of Algorithm~\ref{algo:GSEMO} and $\frac{1}{n}(1-\frac{1}{n})^{n-1}$ is the probability of flipping a specific bit of $\bm{x}$ while keeping other bits unchanged in line~5. This implies that it needs at most $enP_{\max}$ expected number of iterations to increase $J_{\max}$. Thus, after at most $k \cdot enP_{\max}$ iterations in expectation, $J_{\max}$ must have reached $k$.

As the proof of Theorem~\ref{theo-nonmonotone}, we know that $P_{\max} \leq n+1$. Thus, by summing up the expected running time of two phases, we get that the expected running time of the GSEMO-C for finding a solution $\bm{x}$ with $|\bm{x}|\leq k$ and $f(\bm{x})\geq (1-1/e) \cdot (\mathrm{OPT}-k\epsilon)$ is $O(nP_{\max}\log n +knP_{\max})=O(n^2(\log n+k))$.
\end{myproof}

Note that the approximation guarantee, i.e., $f(\bm{x}) \geq (1-1/e)\cdot (\mathrm{OPT}-k\epsilon)$, by the GSEMO-C reaches the best known one, which was previously obtained by the standard greedy algorithm~\cite{krause2008near}. Particularly, when the objective function is monotone, the parameter $\epsilon$ in Definition~\ref{def-approx-monotone} equals 0, and thus the approximation ratio becomes $1-1/e$, which is optimal in general~\cite{nemhauser1978best}, and also consistent with the previous result in~\cite{friedrich2015maximizing}. For the application of sensor placement with the mutual information as the objective function, which is submodular but not necessarily monotone, the GSEMO-C has a bounded approximation guarantee, because the mutual information can be guaranteed to be $\epsilon$-approximately monotone~\cite{krause2008near}, where $\epsilon$ depends on the discretization level of locations.

\section{Analysis on Monotone Approximately Submodular Function Maximization}\label{sec-problem-non-sub}

In this section, we analyze the performance of the GSEMO-C for maximizing monotone and approximately submodular functions with a size constraint, which has various applications as introduced in Section~\ref{subsec-problem-non-sub}. We prove the polynomial-time approximation guarantee of the GSEMO-C w.r.t. each notion of approximate submodularity in Definitions~\ref{def-approx-submodular-1}-\ref{def-approx-submodular-3}, respectively.

\subsection{$\epsilon$-Diminishing Returns}

First, we consider the case that the objective function satisfies the $\epsilon$-diminishing returns property in Definition~\ref{def-approx-submodular-1}. Theorem~\ref{theo-nonsubmodular-1} gives the approximation guarantee of the GSEMO-C.

\begin{theorem}\label{theo-nonsubmodular-1}
For maximizing a monotone function $f$ with a size constraint $k$, where $f$ satisfies the $\epsilon$-diminishing returns property as in Definition~\ref{def-approx-submodular-1}, the expected running time of the GSEMO-C until finding a solution $\bm{x}$ with $|\bm{x}| \leq k$ and $f(\bm{x}) \geq (1-1/e) \cdot (\mathrm{OPT}-k\epsilon)$ is $O(n^2(\log n+k))$.
\end{theorem}

The proof relies on Lemma~\ref{lemma-nonsubmodular-1}, which states that any $\bm{x} \in \{0,1\}^n$ can be improved by at least roughly $(\mathrm{OPT}-f(\bm{x}))/k$ through adding a specific element. The proof of Lemma~\ref{lemma-nonsubmodular-1} is similar to that of Lemma~\ref{lemma-apprx-monotone}. The main difference is that the two inequalities in Eq.~(\refeq{eq-mid-3}) utilize the $\epsilon$-approximately monotone and the diminishing returns properties, respectively, whereas that in Eq.~(\refeq{eq-mid-2}) utilize the monotone and the $\epsilon$-diminishing returns properties, respectively.

\begin{lemma}\label{lemma-nonsubmodular-1}
Assume that a set function $f$ is monotone and satisfies the $\epsilon$-diminishing returns property as in Definition~\ref{def-approx-submodular-1}. For any $\bm{x} \in \{0,1\}^n$, there exists one element $v \notin \bm{x}$ such that
\begin{align}\label{eq-mid-5}
&f(\bm{x} \cup \{v\})-f(\bm{x}) \geq \frac{1}{k} (\mathrm{OPT}-f(\bm{x}))-\epsilon,
\end{align}
where $k$ is the size constraint.
\end{lemma}
\begin{proof}
Let $\bm{x}^*$ be an optimal solution, i.e., $f(\bm{x}^*)=\mathrm{OPT}$. We denote the elements in $\bm{x}^*\setminus \bm{x}$ by $v^*_1,v^*_2,\ldots,v^*_l$, where $|\bm{x}^*\setminus \bm{x}|=l \leq k$. Then, we have
\begin{align}\label{eq-mid-2}
f(\bm{x}^*)-f(\bm{x}) &\leq f(\bm{x} \cup \bm{x}^*)-f(\bm{x})\\
&=f(\bm{x} \cup \{v^*_1,\ldots,v^*_l\})-f(\bm{x})\\
&= \sum^{l}_{j=1} \left(f(\bm{x} \cup \{v^*_1,\ldots,v^*_j\})-f(\bm{x} \cup \{v^*_1,\ldots,v^*_{j-1}\})\right)\\
&\leq \sum^{l}_{j=1} \left(f(\bm{x} \cup \{v^*_j\})-f(\bm{x}) +\epsilon\right),
\end{align}
where the first inequality holds by the monotonicity of $f$, the first equality holds by the definition of $\bm{x}^*\setminus \bm{x}$, and the last inequality is derived by Definition~\ref{def-approx-submodular-1} since $f$ satisfies the $\epsilon$-diminishing returns property. Let $v^*=\arg \max_{v \in \bm{x}^* \setminus \bm{x}} f(\bm{x} \cup \{v\})$. Then, we have
$$
f(\bm{x} \cup \{v^*\})-f(\bm{x}) \geq \frac{1}{l} (f(\bm{x}^*)-f(\bm{x})) -\epsilon \geq \frac{1}{k} (\mathrm{OPT}-f(\bm{x}))-\epsilon.
$$
\end{proof}

Thus, the proof of Theorem~\ref{theo-nonsubmodular-1} can be accomplished in the same way as that of Theorem~\ref{theo-apprx-monotone}. This is because the proof of Theorem~\ref{theo-apprx-monotone} utilizes a quantity $J_{\max}$ based on Eq.~(\refeq{eq-mid-4}), while Eq.~(\refeq{eq-mid-4}) still holds here as Eq.~(\refeq{eq-mid-5}) in Lemma~\ref{lemma-nonsubmodular-1}.

Note that this approximation guarantee, i.e., $f(\bm{x}) \geq (1-1/e) \cdot (\mathrm{OPT}-k\epsilon)$, obtained by the GSEMO-C reaches the best known one, which was previously obtained by the standard greedy algorithm~\cite{krause2010submodular}. Particularly, when the objective function is submodular, the diminishing returns property holds, i.e., $\epsilon=0$, and thus the approximation ratio reaches the optimal one, $1-1/e$.

For the application of dictionary selection in Definition~\ref{def.dictionary} where the objective function $f$ is monotone but not necessarily submodular, as $f$ satisfies the $\epsilon$-diminishing returns property with $\epsilon \leq 4d\mu$~\cite{krause2010submodular}, we have:

\begin{corollary}\label{coro-application-1}
For dictionary selection in Definition~\ref{def.dictionary}, the expected running time of the GSEMO-C until finding a solution $\bm{x}$ with $|\bm{x}| \leq k$ and $f(\bm{x}) \geq (1-1/e) \cdot (\mathrm{OPT}-4dk\mu)$ is $O(n^2(\log n+k))$, where $\mu$ denotes the coherence of $V$, i.e., the maximum absolute correlation between any pair of observation variables.
\end{corollary}

\subsection{Submodularity Ratio}

Next, we prove the approximation guarantee of the GSEMO-C w.r.t. the submodularity ratio presented in Definition~\ref{def-approx-submodular-2}.

\begin{theorem}\label{theo-nonsubmodular-2}
For maximizing a monotone function $f$ with a size constraint $k$, where $f$ is not necessarily submodular, the expected running time of the GSEMO-C until finding a solution $\bm{x}$ with $|\bm{x}| \leq k$ and $f(\bm{x}) \geq (1-e^{-\gamma_{\min}}) \cdot \mathrm{OPT}$ is $O(n^2(\log n+k))$, where $\gamma_{\min}=\min_{\bm{x}:|\bm{x}|=k-1}\gamma_{\bm{x},k}$ and $\gamma_{\bm{x},k}$ is the submodularity ratio of $f$ w.r.t. $\bm{x}$ and $k$ as in Definition~\ref{def-approx-submodular-2}.
\end{theorem}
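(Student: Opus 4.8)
The plan is to reproduce the proof of Theorem~\ref{theo-nonsubmodular-1} line for line, changing only the inductive inequality encoded in the progress measure and inserting one extra ingredient to cope with the fact that the submodularity ratio depends on the current solution. I would split the run into the same two phases: phase~(1) reaches $\{0\}^n$, and by the identical argument used in Theorem~\ref{theo-nonsubmodular-1} this costs $O(nP_{\max}\log n)$ expected iterations, after which $\{0\}^n$ is never removed from $P$. For phase~(2) I introduce the progress measure
$$
J_{\max}=\max\left\{j \in [0,k] \mid \exists \bm{x} \in P, |\bm{x}| \leq j \wedge f(\bm{x}) \geq \left(1-\left(1-\tfrac{\gamma_{\min}}{k}\right)^{j}\right) \cdot OPT\right\},
$$
which is exactly the quantity from Theorem~\ref{theo-nonsubmodular-1} with $\tfrac{1}{k}$ replaced by $\tfrac{\gamma_{\min}}{k}$ and $OPT-k\epsilon$ replaced by $OPT$. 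Since $\{0\}^n\in P$ we have $J_{\max}\geq 0$, and $J_{\max}$ cannot decrease by the same domination argument as before (any solution that evicts a witness weakly dominates it).

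The core of the proof is the induction $J_{\max}=i<k \Rightarrow J_{\max}=i+1$. Let $\bm{x}$ be a witnessing solution, so $|\bm{x}|\leq i\leq k-1$ and $f(\bm{x}) \geq (1-(1-\tfrac{\gamma_{\min}}{k})^{i})\cdot OPT$. I apply Lemma~\ref{lemma-nonsubmodular-2} to obtain an element $v\notin\bm{x}$ with $f(\bm{x}\cup\{v\})-f(\bm{x}) \geq \frac{\gamma_{\bm{x},k}}{k}(OPT-f(\bm{x}))$, and then replace $\gamma_{\bm{x},k}$ by the uniform lower bound $\gamma_{\min}$ (legitimate since $OPT-f(\bm{x})\geq 0$). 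Writing $a=1-\frac{\gamma_{\min}}{k}$ and $\bm{x'}=\bm{x}\cup\{v\}$, this yields $f(\bm{x'})\geq a\,f(\bm{x})+(1-a)\cdot OPT \geq a\,(1-a^{i})\cdot OPT+(1-a)\cdot OPT = (1-a^{i+1})\cdot OPT$, while $|\bm{x'}|\leq i+1$. The probability of generating this $\bm{x'}$ in one iteration (select $\bm{x}$, flip the single relevant $0$-bit) is at least $\frac{1}{enP_{\max}}$, and the inclusion argument of Theorem~\ref{theo-nonsubmodular-1} shows $\bm{x'}$ either enters $P$ or is blocked only by a solution already witnessing $J_{\max}\geq i+1$; either way $J_{\max}$ advances, so each of the at most $k$ increments takes $\leq enP_{\max}$ expected iterations.

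When $J_{\max}=k$ there is $\bm{x}\in P$ with $|\bm{x}|\leq k$ and $f(\bm{x})\geq (1-(1-\frac{\gamma_{\min}}{k})^{k})\cdot OPT \geq (1-e^{-\gamma_{\min}})\cdot OPT$, using $(1-\frac{\gamma_{\min}}{k})^{k}\leq e^{-\gamma_{\min}}$. Summing the two phases and using $P_{\max}\leq n+1$ (exactly as in Theorem~\ref{theo-nonsubmodular-1}, since each value of $f_2(\bm{x})=-|\bm{x}|$ supports at most one incomparable solution) gives total expected time $O(nP_{\max}\log n + knP_{\max})=O(n^2(\log n+k))$, as claimed.

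I expect the one genuinely new point — the main obstacle — to be justifying the replacement of the solution-dependent ratio $\gamma_{\bm{x},k}$ by the single constant $\gamma_{\min}=\min_{|\bm{x}|=k-1}\gamma_{\bm{x},k}$. This requires two observations: every solution appearing in the induction has size at most $k-1$ (the final step lifts a size-$(\leq k-1)$ set to size $\leq k$), and the submodularity ratio is non-increasing in its set argument, so that $\gamma_{\bm{x},k}\geq\gamma_{\bm{y},k}\geq\gamma_{\min}$ whenever $\bm{x}\subseteq\bm{y}$ with $|\bm{y}|=k-1$. Verifying this monotonicity from Definition~\ref{def-approx-submodular-2} (enlarging the reference set changes the feasible region of the minimization) is the step demanding care; once it is in place, the remainder is a transcription of the Theorem~\ref{theo-nonsubmodular-1} argument.
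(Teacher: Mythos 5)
Your proposal is correct and reproduces the paper's proof essentially line for line: same two-phase decomposition, same progress measure $J_{\max}$ with factor $\left(1-\frac{\gamma_{\min}}{k}\right)^{j}$, same induction step via Lemma~\ref{lemma-nonsubmodular-2}, and the same $P_{\max}\leq n+1$ bound yielding $O(n^2(\log n+k))$, the only cosmetic difference being that you substitute $\gamma_{\bm{x},k}\geq\gamma_{\min}$ inside the marginal-gain inequality (using $OPT-f(\bm{x})\geq 0$) whereas the paper carries $\gamma_{\bm{x},k}$ one step further and substitutes it after expanding $f(\bm{x'})$. The step you single out as the main obstacle---deriving $\gamma_{\bm{x},k}\geq\gamma_{\min}$ from $|\bm{x}|\leq k-1$ together with the submodularity ratio being non-increasing in its set argument---is exactly the justification the paper itself invokes, asserted there without proof as ``easily derived''.
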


The proof relies on Lemma~\ref{lemma-nonsubmodular-2}, which shows that any $\bm{x} \in \{0,1\}^n$ can be improved by adding a specific element such that the increment on $f$ is proportional to $(\mathrm{OPT}-f(\bm{x}))$ and depends on the submodularity ratio $\gamma_{\bm{x},k}$.

\begin{lemma}[\cite{qian2016parallel}]\label{lemma-nonsubmodular-2}
Assume that a set function $f$ is monotone but not necessarily submodular. For any $\bm{x} \in \{0,1\}^n$, there exists one element $v \notin \bm{x}$ such that
\begin{align}\label{eq-mid-6}
&f(\bm{x} \cup \{v\})-f(\bm{x}) \geq \frac{\gamma_{\bm{x},k}}{k} (\mathrm{OPT}-f(\bm{x})),
\end{align}
where $k$ is the size constraint, and $\gamma_{\bm{x},k}$ is the submodularity ratio of $f$ w.r.t. $\bm{x}$ and $k$ as in Definition~\ref{def-approx-submodular-2}.
\end{lemma}

The proof of Theorem~\ref{theo-nonsubmodular-2} is similar to that of Theorem~\ref{theo-apprx-monotone}. The main difference is that a different inductive inequality on $f$ is used in the definition of the quantity $J_{\max}$, as Eq.~(\ref{eq-mid-4}) in Lemma~\ref{lemma-apprx-monotone} changes to Eq.~(\refeq{eq-mid-6}) in Lemma~\ref{lemma-nonsubmodular-2}. For concise illustration, we will mainly show the difference in the proof of Theorem~\ref{theo-nonsubmodular-2}.

\begin{myproof}{Theorem~\ref{theo-nonsubmodular-2}}
The proof is similar to that of Theorem~\ref{theo-apprx-monotone}. We use a different $J_{\max}$, which is defined as
$$
J_{\max}=\max\left\{j \in \{0,1,\ldots,k\} \mid \exists \bm{x} \in P: |\bm{x}| \leq j \wedge f(\bm{x}) \geq \left(1-\left(1-\frac{\gamma_{\min}}{k}\right)^j\right) \cdot \mathrm{OPT}\right\}.$$
It is easy to verify that $J_{\max}=k$ implies that the desired approximation guarantee is reached, because there must exist one solution $\bm{x}$ in $P$ satisfying that $|\bm{x}| \leq k$ and $f(\bm{x}) \geq (1-(1-\frac{\gamma_{\min}}{k})^k) \cdot \mathrm{OPT} \geq (1-e^{-\gamma_{\min}}) \cdot \mathrm{OPT}$. Assume that currently $J_{\max}=i<k$ and $\bm{x}$ is a corresponding solution, i.e., $|\bm{x}| \leq i$ and $f(\bm{x})\geq (1-(1-\frac{\gamma_{\min}}{k})^i) \cdot \mathrm{OPT}$. We then only need to show that flipping one specific 0-bit of $\bm{x}$ can generate a new solution $\bm{x}'$ with $f(\bm{x}')\geq (1-(1-\frac{\gamma_{\min}}{k})^{i+1}) \cdot \mathrm{OPT}$. By Lemma~\ref{lemma-nonsubmodular-2}, we know that flipping one specific 0-bit of $\bm{x}$ can generate a new solution $\bm{x}'$, which satisfies $f(\bm{x}')-f(\bm{x}) \geq \frac{\gamma_{\bm{x},k}}{k} (\mathrm{OPT}-f(\bm{x}))$. Then, we have
\begin{align}
f(\bm{x}') &\geq \left(1-\frac{\gamma_{\bm{x},k}}{k}\right)f(\bm{x})+\frac{\gamma_{\bm{x},k}}{k}\cdot \mathrm{OPT} \\
&\geq \left(1-\left(1-\frac{\gamma_{\bm{x},k}}{k}\right)\left(1-\frac{\gamma_{\min}}{k}\right)^{i}\right)\cdot \mathrm{OPT}\\
&\geq \left(1-\left(1-\frac{\gamma_{\min}}{k}\right)^{i+1}\right)\cdot \mathrm{OPT},
\end{align}
where the second inequality holds by $f(\bm{x})\geq (1-(1-\frac{\gamma_{\min}}{k})^i) \cdot \mathrm{OPT}$, and the last holds by $\gamma_{\bm{x},k} \geq \gamma_{\min}$, which can be derived from $|\bm{x}|<k$ and $\gamma_{\bm{x},k}$ decreasing with $\bm{x}$. Thus, the theorem holds.
\end{myproof}

Note that it has been proved that the standard greedy algorithm can find a subset $\bm{x}$ with $|\bm{x}|=k$ and $f(\bm{x}) \geq (1-e^{-\gamma_{\bm{x},k}}) \cdot \mathrm{OPT}$~\cite{das2011submodular}. Thus, Theorem~\ref{theo-nonsubmodular-2} shows that the GSEMO-C can achieve nearly this best known approximation guarantee. Particularly, when the objective function is submodular, the submodularity ratio in Definition~\ref{def-approx-submodular-2} satisfies $\forall\bm{x},l: \gamma_{\bm{x},l}=1$, and thus the approximation ratio, i.e., $1-e^{-\gamma_{\min}}$, by the GSEMO-C reaches the optimal one, $1-1/e$.

For the application of sparse regression in Definition~\ref{def_sr} where the objective function $R^2_{z,\bm{x}}$ is monotone but not necessarily submodular, because the submodularity ratio of $R^2_{z,\bm{x}}$ can be lower bounded as $\gamma_{\bm{x},l} \geq \lambda_{\min}(\mathbf{C},|\bm{x}|+l) \geq \lambda_{\min}(\mathbf{C},n)$~\cite{das2011submodular}, implying $\gamma_{\min}=\min_{\bm{x}:|\bm{x}|=k-1}\gamma_{\bm{x},k}\geq \lambda_{\min}(\mathbf{C},2k-1) \geq \lambda_{\min}(\mathbf{C},n)$, we have:

\begin{corollary}
For sparse regression in Definition~\ref{def_sr}, the expected running time of the GSEMO-C until finding a solution $\bm{x}$ with $|\bm{x}| \leq k$ and $f(\bm{x}) \geq (1-e^{-\lambda_{\min}(\mathbf{C},2k-1)}) \cdot \mathrm{OPT}\geq (1-e^{-\lambda_{\min}(\mathbf{C},n)}) \cdot \mathrm{OPT}$ is $O(n^2(\log n+k))$, where $\lambda_{\min}(\mathbf{C},m)$ denotes the smallest $m$-sparse eigenvalue of the covariance matrix $\mathbf{C}$ between all observation variables.
\end{corollary}

For the application of sparse support selection in Definition~\ref{def_sss} where the objective function $f(\bm{x})=\max_{\mathrm{supp}(\bm{s})\subseteq \bm{x}}g(\bm{s})-g(\bm{0})$ is monotone but not necessarily submodular, the submodularity ratio of $f$ satisfies $\gamma_{\bm{x},l} \geq m/M$, when the concave function $g$ is $m$-strongly concave on all $(|\bm{x}|+l)$-sparse vectors and $M$-smooth on all $(|\bm{x}|+1)$-sparse vectors~\cite{elenberg2018restricted}. Thus, $\gamma_{\min}=\min_{\bm{x}:|\bm{x}|=k-1}\gamma_{\bm{x},k} \geq m/M$, when $g$ is $m$-strongly concave on all $(2k-1)$-sparse vectors and $M$-smooth on all $k$-sparse vectors. We have:

\begin{corollary}
For sparse support selection in Definition~\ref{def_sss} where the concave function $g$ is $m$-strongly concave on all $(2k-1)$-sparse vectors and $M$-smooth on all $k$-sparse vectors, the expected running time of the GSEMO-C until finding a solution $\bm{x}$ with $|\bm{x}| \leq k$ and $f(\bm{x}) \geq (1-e^{-m/M}) \cdot \mathrm{OPT}$ is $O(n^2(\log n+k))$.
\end{corollary}

For the application of Bayesian experimental design in Definition~\ref{def.bed} where the objective function $f$ is monotone but not necessarily submodular, because the submodularity ratio of $f$ satisfies $\forall \bm{x},l: \gamma_{\bm{x},l} \geq \beta^2/(\|\mathbf{V}\|^2(\beta^2+\sigma^{-2}\|\mathbf{V}\|^2))$~\cite{bian2017guarantees}, implying $\gamma_{\min}\geq \beta^2/(\|\mathbf{V}\|^2(\beta^2+\sigma^{-2}\|\mathbf{V}\|^2))$, we have:

\begin{corollary}
For Bayesian experimental design in Definition~\ref{def.bed}, the expected running time of the GSEMO-C until finding a solution $\bm{x}$ with $|\bm{x}| \leq k$ and $f(\bm{x}) \geq (1-e^{-\beta^2/(\|\mathbf{V}\|^2(\beta^2+\sigma^{-2}\|\mathbf{V}\|^2))}) \cdot \mathrm{OPT}$ is $O(n^2(\log n+k))$.
\end{corollary}

For the application of determinantal function maximization in Definition~\ref{def.dfm} where the objective function $f$ is monotone but not necessarily submodular, because the submodularity ratio of $f$ satisfies $\forall \bm{x},l: \gamma_{\bm{x},l} \geq (\lambda_n(\mathbf{A})-1)/((\lambda_1(\mathbf{A})-1)\prod^{n-1}_{i=1}\lambda_i(\mathbf{A}))$~\cite{qian2018approximation}, implying $\gamma_{\min}\geq (\lambda_n(\mathbf{A})-1)/((\lambda_1(\mathbf{A})-1)\prod^{n-1}_{i=1}\lambda_i(\mathbf{A}))$, we have:

\begin{corollary}
For determinantal function maximization in Definition~\ref{def.dfm}, the expected running time of the GSEMO-C until finding a solution $\bm{x}$ with $|\bm{x}| \leq k$ and $f(\bm{x}) \geq (1-e^{-(\lambda_n(\mathbf{A})-1)/((\lambda_1(\mathbf{A})-1)\prod^{n-1}_{i=1}\lambda_i(\mathbf{A}))}) \cdot \mathrm{OPT}$ is $O(n^2(\log n+k))$, where $\mathbf{A}=\mathbf{I}_{n}+\sigma^{-2}\mathbf{C}$ and $\lambda_i(\mathbf{A})$ denotes the $i$-th largest eigenvalue of $\mathbf{A}$.
\end{corollary}

\subsection{$\epsilon$-Approximate Submodularity}

Finally, we consider the case that the objective function is $\epsilon$-approximately submodular as in Definition~\ref{def-approx-submodular-3}. Theorem~\ref{theo-nonsubmodular-3} gives the approximation guarantee of the GSEMO-C.

\begin{theorem}\label{theo-nonsubmodular-3}
For maximizing a monotone function $f$ with a size constraint $k$, where $f$ is $\epsilon$-approximately submodular as in Definition~\ref{def-approx-submodular-3}, the expected running time of the GSEMO-C until finding a solution $\bm{x}$ with $|\bm{x}| \leq k$ and $f(\bm{x}) \geq \frac{1}{1+\frac{2k\epsilon}{1-\epsilon}}(1-(1-\frac{1}{k})^k(\frac{1-\epsilon}{1+\epsilon})^k) \cdot \mathrm{OPT} \geq \frac{1}{1+\frac{2k\epsilon}{1-\epsilon}}(1-e^{-1}(\frac{1-\epsilon}{1+\epsilon})^k) \cdot \mathrm{OPT}$ is $O(n^2(\log n+k))$.
\end{theorem}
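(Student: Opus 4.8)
The plan is to mirror the two-phase analysis of Theorem~\ref{theo-nonsubmodular-1}, replacing only the inductive invariant that defines the progress measure $J_{\max}$ so that it matches the affine recurrence supplied by Lemma~\ref{lemma-nonsubmodular-3}. Phase~(1), which runs until the empty solution $\{0\}^n$ enters the population $P$, is verbatim the same as before: tracking the minimum number of 1-bits in $P$ shows it is driven to $0$ in $O(nP_{\max}\log n)$ expected iterations, and $\{0\}^n$ is never removed once generated, since it has the smallest subset size and cannot be weakly dominated.

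For phase~(2) I would introduce the quantity
$$J_{\max}=\max\left\{j \in [0,k] \;\Big|\; \exists \bm{x} \in P,\; |\bm{x}| \leq j \;\wedge\; f(\bm{x}) \geq \frac{1-\epsilon}{k(1+\epsilon)}\left(\sum^{j-1}_{i=0} \left(\left(1-\frac{1}{k}\right)\frac{1-\epsilon}{1+\epsilon}\right)^{i}\right) \cdot OPT\right\}.$$
The empty set gives $J_{\max}\geq 0$ (the sum over an empty index range being $0$), and the algebraic identity displayed just before Lemma~\ref{lemma-nonsubmodular-3} shows that $J_{\max}=k$ yields exactly the claimed bound $\frac{1}{1+\frac{2k\epsilon}{1-\epsilon}}(1-(1-\frac{1}{k})^k(\frac{1-\epsilon}{1+\epsilon})^k)\cdot OPT$. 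As in Theorem~\ref{theo-nonsubmodular-1}, $J_{\max}$ cannot decrease, because any newly generated solution that evicts the witnessing $\bm{x}$ from $P$ must weakly dominate it and hence inherit both its size bound and its $f$-value bound.

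Next I would argue the increase step. Suppose $J_{\max}=i<k$ with witness $\bm{x}$, so $|\bm{x}|\leq i$ and $f(\bm{x})$ meets the level-$i$ invariant. Lemma~\ref{lemma-nonsubmodular-3} guarantees an element $v\notin\bm{x}$ whose addition yields $\bm{x'}$ with $f(\bm{x'}) \geq \frac{1-\epsilon}{k(1+\epsilon)}OPT + (1-\frac{1}{k})\frac{1-\epsilon}{1+\epsilon}f(\bm{x})$. Substituting the level-$i$ invariant into this recurrence and reindexing the geometric sum shows that $\bm{x'}$ satisfies the level-$(i+1)$ invariant, while $|\bm{x'}|\leq i+1$; hence $\bm{x'}$ is accepted into $P$ and $J_{\max}$ rises to at least $i+1$ (otherwise $\bm{x'}$ would be dominated by a solution already certifying $J_{\max}>i$, a contradiction). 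Selecting $\bm{x}$ in line~4 and flipping precisely the 0-bit for $v$ in line~5 has probability at least $\frac{1}{P_{\max}}\cdot\frac{1}{n}(1-\frac{1}{n})^{n-1}\geq\frac{1}{enP_{\max}}$, so each unit increase needs at most $enP_{\max}$ expected iterations and reaching $J_{\max}=k$ costs at most $k\cdot enP_{\max}$. With $P_{\max}\leq n+1$ as before, summing the two phases gives $O(nP_{\max}\log n + knP_{\max}) = O(n^2(\log n+k))$.

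The genuinely new ingredient, and the only place where real work is needed, is verifying that the affine recurrence $f(\bm{x'}) \geq b\cdot OPT + a\cdot f(\bm{x})$ with $a=(1-\frac{1}{k})\frac{1-\epsilon}{1+\epsilon}$ and $b=\frac{1-\epsilon}{k(1+\epsilon)}$ propagates the geometric-sum invariant $b\,OPT\sum_{i'=0}^{j-1}a^{i'}$ from level $i$ to level $i+1$; everything else is a transcription of the proof of Theorem~\ref{theo-nonsubmodular-1}. This propagation is routine algebra, so I do not anticipate a substantive obstacle. The only mild subtlety is pinning down the correct closed form of the invariant so that its value at $j=k$ coincides with the theorem's stated guarantee, which is precisely what the identity preceding Lemma~\ref{lemma-nonsubmodular-3} certifies.
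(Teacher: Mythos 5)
Your proposal is correct and follows essentially the same route as the paper's proof: the same two-phase analysis, the same progress measure $J_{\max}$, and the same use of Lemma~\ref{lemma-nonsubmodular-3} to propagate the invariant one level per accepted bit-flip. The only difference is cosmetic—you write the invariant as the partial geometric sum $\frac{1-\epsilon}{k(1+\epsilon)}\sum_{i=0}^{j-1}\bigl(\bigl(1-\frac{1}{k}\bigr)\frac{1-\epsilon}{1+\epsilon}\bigr)^{i}\cdot OPT$ while the paper uses the equivalent closed form $\frac{1}{1+\frac{2k\epsilon}{1-\epsilon}}\bigl(1-\bigl(1-\frac{1}{k}\bigr)^{j}\bigl(\frac{1-\epsilon}{1+\epsilon}\bigr)^{j}\bigr)\cdot OPT$, the two being equal by exactly the identity displayed before Lemma~\ref{lemma-nonsubmodular-3}.
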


The proof relies on the following lemma, which shows that any $\bm{x} \in \{0,1\}^n$ can be improved by adding a specific element $v$ such that $f(\bm{x} \cup \{v\})-\frac{1-\epsilon}{1+\epsilon}f(\bm{x})$ is proportional to the current distance to the optimum, i.e., $\mathrm{OPT}-f(\bm{x})$.

\begin{lemma}\label{lemma-nonsubmodular-3}
Assume that a set function $f$ is monotone and $\epsilon$-approximately submodular as in Definition~\ref{def-approx-submodular-3}. For any $\bm{x} \in \{0,1\}^n$, there exists one element $v \notin \bm{x}$ such that
\begin{align}\label{eq-mid-7}
&f(\bm{x} \cup \{v\})-\frac{1-\epsilon}{1+\epsilon}f(\bm{x}) \geq \frac{1-\epsilon}{k(1+\epsilon)} (\mathrm{OPT}-f(\bm{x})),
\end{align}
where $k$ is the size constraint.
\end{lemma}
\begin{proof}
Let $\bm{x}^*$ be an optimal solution, i.e., $f(\bm{x}^*)=\mathrm{OPT}$. Let $v^*=\arg \max_{v \in \bm{x}^* \setminus \bm{x}} f(\bm{x} \cup \{v\})$. As $f$ is $\epsilon$-approximately submodular as in Definition~\ref{def-approx-submodular-3}, we use $g$ to denote one corresponding submodular function satisfying that for all $\bm{x} \in \{0,1\}^n$, $(1-\epsilon)g(\bm{x}) \leq f(\bm{x}) \leq (1+\epsilon)g(\bm{x})$. Then, we have
\begin{align}
g(\bm{x}^* \cup \bm{x})-g(\bm{x}) &\leq \sum_{v \in \bm{x}^*\setminus \bm{x}} \big(g(\bm{x} \cup \{v\})-g(\bm{x})\big)\\
&\leq \sum_{v \in \bm{x}^*\setminus \bm{x}} \left(\frac{1}{1-\epsilon}f(\bm{x} \cup \{v\})-g(\bm{x})\right)\\
&\leq k\left(\frac{1}{1-\epsilon}f(\bm{x} \cup \{v^*\})-g(\bm{x})\right),
\end{align}
where the first inequality holds by the submodularity of $g$ (i.e., Eq.~(\ref{def-submodular-2})), the second inequality holds by $(1-\epsilon)g(\bm{x}) \leq f(\bm{x})$ for any $\bm{x}$, and the last inequality holds by the definition of $v^*$ and $|\bm{x}^*|\leq k$. By reordering the terms, we get
$$
f(\bm{x} \cup \{v^*\}) \geq \frac{1-\epsilon}{k}g(\bm{x}^* \cup \bm{x})+\left(1-\frac{1}{k}\right)(1-\epsilon)g(\bm{x}).
$$
Because $g(\bm{x}) \geq \frac{1}{1+\epsilon}f(\bm{x})$ and $g(\bm{x}^* \cup \bm{x}) \geq \frac{1}{1+\epsilon}f(\bm{x}^* \cup \bm{x}) \geq \frac{1}{1+\epsilon}f(\bm{x}^*)=\frac{1}{1+\epsilon}\mathrm{OPT}$, where the last inequality holds by the monotonicity of $f$, we have
$$
f(\bm{x} \cup \{v^*\}) \geq \frac{1-\epsilon}{k(1+\epsilon)}\mathrm{OPT}+\left(1-\frac{1}{k}\right)\frac{1-\epsilon}{1+\epsilon}f(\bm{x}).
$$
By reordering the terms, the lemma holds.
\end{proof}

The proof of Theorem~\ref{theo-nonsubmodular-3} is also similar to that of Theorem~\ref{theo-apprx-monotone}, except that a different inductive inequality on $f$ is used in the definition of the quantity $J_{\max}$, as Eq.~(\ref{eq-mid-4}) in Lemma~\ref{lemma-apprx-monotone} changes to Eq.~(\refeq{eq-mid-7}) in Lemma~\ref{lemma-nonsubmodular-3}.

\begin{myproof}{Theorem~\ref{theo-nonsubmodular-3}}
The proof is similar to that of Theorem~\ref{theo-apprx-monotone}. We use a different $J_{\max}$, which is defined as
$$
J_{\max}=\max\left\{j \in \{0,1,\ldots,k\} \mid \exists \bm{x} \in P: |\bm{x}| \leq j \wedge f(\bm{x}) \geq \frac{1}{1+\frac{2k\epsilon}{1-\epsilon}}\left(1-\left(1\!-\!\frac{1}{k}\right)^j\left(\frac{1\!-\!\epsilon}{1\!+\!\epsilon}\right)^j\right) \cdot \mathrm{OPT}\right\}.$$
It is easy to verify that $J_{\max}=k$ implies that the desired approximation guarantee is reached. Assume that currently $J_{\max}=i<k$ and $\bm{x}$ is a corresponding solution, i.e., $|\bm{x}| \leq i$ and $f(\bm{x})\geq \frac{1}{1+\frac{2k\epsilon}{1-\epsilon}}(1-(1-\frac{1}{k})^i(\frac{1-\epsilon}{1+\epsilon})^i) \cdot \mathrm{OPT}$. We then only need to show that flipping one specific 0-bit of $\bm{x}$ can generate a new solution $\bm{x}'$ with $f(\bm{x}')\geq \frac{1}{1+\frac{2k\epsilon}{1-\epsilon}}(1-(1-\frac{1}{k})^{i+1}(\frac{1-\epsilon}{1+\epsilon})^{i+1}) \cdot \mathrm{OPT}$. By Lemma~\ref{lemma-nonsubmodular-3}, we know that flipping one specific 0-bit of $\bm{x}$ can generate a new solution $\bm{x}'$, which satisfies $f(\bm{x}')-\frac{1-\epsilon}{1+\epsilon}f(\bm{x}) \geq \frac{1-\epsilon}{k(1+\epsilon)} (\mathrm{OPT}-f(\bm{x}))$. Then, we have
$$
f(\bm{x}') \geq \left(1-\frac{1}{k}\right)\frac{1-\epsilon}{1+\epsilon}f(\bm{x})+\frac{1-\epsilon}{k(1+\epsilon)}\cdot \mathrm{OPT} \geq \frac{1}{1+\frac{2k\epsilon}{1-\epsilon}}\left(1-\left(1-\frac{1}{k}\right)^{i+1}\left(\frac{1-\epsilon}{1+\epsilon}\right)^{i+1}\right)\cdot \mathrm{OPT},
$$
where the second inequality is derived by applying $f(\bm{x}) \geq \frac{1}{1+\frac{2k\epsilon}{1-\epsilon}}(1-(1-\frac{1}{k})^i(\frac{1-\epsilon}{1+\epsilon})^i) \cdot \mathrm{OPT}$. Thus, the theorem holds.
\end{myproof}

Note that the standard greedy algorithm obtains the best known approximation guarantee, i.e., $f(\bm{x}) \geq \frac{1}{1+\frac{4k\epsilon}{(1-\epsilon)^2}}(1-(1-\frac{1}{k})^k(\frac{1-\epsilon}{1+\epsilon})^{2k}) \cdot \mathrm{OPT}$~\cite{horel2016maximization}. Compared with this, the approximation guarantee, i.e., $f(\bm{x}) \geq \frac{1}{1+\frac{2k\epsilon}{1-\epsilon}}(1-(1-\frac{1}{k})^k(\frac{1-\epsilon}{1+\epsilon})^k) \cdot \mathrm{OPT}$, of the GSEMO-C shown in Theorem~\ref{theo-nonsubmodular-3} is slightly better, because
\begin{align}
&\frac{1}{1+\frac{2k\epsilon}{1-\epsilon}}\left(1-\left(1-\frac{1}{k}\right)^{k}\left(\frac{1-\epsilon}{1+\epsilon}\right)^{k}\right)=\frac{1-\epsilon}{k(1+\epsilon)}\cdot \sum^{k-1}_{i=0} \left(\left(1-\frac{1}{k}\right)\frac{1-\epsilon}{1+\epsilon}\right)^{i}\\
&\geq \frac{(1-\epsilon)^2}{k(1+\epsilon)^2}\cdot \sum^{k-1}_{i=0}  \left(\left(1-\frac{1}{k}\right)\left(\frac{1-\epsilon}{1+\epsilon}\right)^{2}\right)^{i}= \frac{1}{1+\frac{4k\epsilon}{(1-\epsilon)^2}}\left(1-\left(1-\frac{1}{k}\right)^k\left(\frac{1-\epsilon}{1+\epsilon}\right)^{2k}\right).
\end{align}
Particularly, when the objective function is submodular, the parameter $\epsilon$ in Definition~\ref{def-approx-submodular-3} equals 0, and thus the approximation ratio of the GSEMO-C reaches the optimal one, $1-1/e$. When $\epsilon \leq 1/k$, we have
$$
\frac{1}{1+\frac{2k\epsilon}{1-\epsilon}}\left(1-\left(1-\frac{1}{k}\right)^{k}\left(\frac{1-\epsilon}{1+\epsilon}\right)^{k}\right) \geq \frac{1}{1+\frac{2}{1-1/k}}\left(1-\frac{1}{e}\right)\geq \frac{1}{5}\left(1-\frac{1}{e}\right),
$$
where the last inequality holds by $k\geq 2$; thus, the GSEMO-C can still achieve a constant approximation ratio, as shown below.

\begin{corollary}\label{coro-application-2}
For maximizing a monotone function $f$ with a size constraint $k$, where $f$ is $\epsilon$-approximately submodular with $\epsilon \leq 1/k$, the expected running time of the GSEMO-C until finding a solution $\bm{x}$ with $|\bm{x}| \leq k$ and $f(\bm{x}) \geq (1/5)(1-1/e) \cdot \mathrm{OPT}$ is $O(n^2(\log n+k))$.
\end{corollary}

\section{Conclusion}\label{sec-conclusion}

This paper theoretically studies the approximation performance of EAs for solving the general classes of combinatorial optimization problems, i.e., maximizing submodular functions with/without a size constraint and maximizing monotone approximately submodular functions with a size constraint. We prove that within polynomial expected running time, a simple multi-objective EA called GSEMO-C can achieve good approximation guarantees for any concerned problem class. These results may help to provide a theoretical explanation for the empirically good performance of EAs in various applications. A question that will be examined in the future is whether simple single-objective EAs such as the (1+1)-EA can achieve good approximation guarantees on the concerned problem classes, which has been partially addressed recently~\cite{friedrich2018heavy}. It is also interesting to study the performance of EAs under more complicated constraints, e.g., matroid and knapsack constraints~\cite{lee2009non}.

\section{Acknowledgments}

The authors want to thank the associate editor and anonymous reviewers for their helpful comments and suggestions. C. Qian, Y. Yu and K. Tang were supported by the NSFC (61603367, 61672478, 61876077). X. Yao was supported by the Program for Guangdong Introducing Innovative and Enterpreneurial Teams (2017ZT07X386) and Shenzhen Peacock Plan (KQTD2016112514355531). Z.-H. Zhou was supported by the National Key R\&D Program of China (2018YFB1004300) and Collaborative Innovation Center of Novel Software Technology and Industrialization.

\bibliography{aij19-submodularMOEA}
\bibliographystyle{abbrvnat}

\end{document}